%% file: main.tex
\documentclass[accepted, breaklinks, colorlinks=True]{uai2026} % for initial submission
\usepackage{algorithm}
\usepackage{algpseudocode}
\input{math_commands.tex}

\usepackage[american]{babel}

\usepackage[utf8]{inputenc} % allow utf-8 input
\usepackage[T1]{fontenc}    % use 8-bit T1 fonts
\usepackage{hyperref}       % hyperlinks
\usepackage{url}            % simple URL typesetting
\usepackage{booktabs}       % professional-quality tables
\usepackage{amsfonts}       % blackboard math symbols
\usepackage{nicefrac}       % compact symbols for 1/2, etc.
\usepackage{microtype}      % microtypography
\usepackage{graphicx}       % figs
\usepackage{amsmath}
\usepackage{amssymb}
\usepackage{mathtools}
\usepackage{amsthm}
\usepackage{multicol}
\usepackage{multirow}
\usepackage[inline]{enumitem}

\newcommand{\blfootnote}[1]{%
  \begingroup
  \renewcommand\thefootnote{}\footnote{#1}%
  \addtocounter{footnote}{-1}%
  \endgroup
}

\theoremstyle{plain}
\newtheorem{theorem}{Theorem}[section]

\theoremstyle{definition}

\theoremstyle{remark}
\usepackage{natbib} % has a nice set of citation styles and commands
\usepackage{tikz} % nice language for creating drawings and diagrams

\title{Distilling Safe LLM Systems via Soft Prompts for On Device Settings}

\author[1]{Motasem Alfarra}
\author[1]{Cristina Pinneri}
\author[1]{Dana Kianfar}
\author[1]{Mohammed Almousa}
\author[1]{Christos Louizos}
\affil[1]{%
    Qualcomm AI Research
}
\begin{document}
\maketitle
\input{sections/0_abstract}
\input{sections/1_introduction}

\input{sections/2_related_work}
\input{sections/3_methodology}

\input{sections/4_experiments}

\input{sections/5_conclusions}
\bibliography{main}
% \bibliography{main}
% \bibliographystyle{iclr2026_conference}
\clearpage
\onecolumn
\appendix
% \title{Distilling Safe LLM Systems via Soft Prompts\\(Supplementary Material)}
% \maketitle
\input{sections/6_appendix}

\newpage

% \input{sections/0_abstract}
% This Supplementary Material should be submitted together with the main paper.

% \appendix
% \section{Additional simulation results}
% Table~\ref{tab:supp-data} lists additional simulation results; see also \citet{einstein} for a comparison. 

% \begin{table}[!h]
%     \centering
%     \caption{An Interesting Table.} \label{tab:supp-data}
%     \begin{tabular}{rl}
%         \toprule % from booktabs package
%         \bfseries Dataset & \bfseries Result\\
%         \midrule % from booktabs package
%         Data1 & 0.12345\\
%         Data2 & 0.67890\\
%         Data3 & 0.54321\\
%         Data4 & 0.09876\\
%         \bottomrule % from booktabs package
%     \end{tabular}
% \end{table}

% \section{Math font exposition}
% NOTE: necessary when ptmx or no mathfont class option is given
% \providecommand{\upGamma}{\Gamma}
% \providecommand{\uppi}{\pi}

\end{document}

%% file: math_commands.tex
\usepackage{amsmath,amsfonts,bm}

\def\eqref#1{equation~\ref{#1}}
\def\1{\bm{1}}

\def\eps{{\epsilon}}

\DeclareMathAlphabet{\mathsfit}{\encodingdefault}{\sfdefault}{m}{sl}
\SetMathAlphabet{\mathsfit}{bold}{\encodingdefault}{\sfdefault}{bx}{n}

%% file: sections/0_abstract.tex
\begin{abstract}
    Deploying safe large language models (LLMs) on resource-constrained edge devices presents a critical challenge: while dual-model systems combining LLMs with guard models provide effective safety guarantees, their substantial memory and computational demands make them prohibitively expensive for on-device deployment. This paper presents a comprehensive study of parameter-efficient safety alignment methods for resource-constrained settings. Through systematic evaluation across multiple LLM architectures, training objectives, and parameter-efficient fine-tuning approaches, we identify that \textbf{soft prompts combined with distillation-based training consistently outperform alternative methods}. We introduce distillation frameworks based on total variation and KL divergence that effectively transfer safety behaviors from guard models into learned soft prompts. Our evaluations on various benchmarks demonstrate that this combination achieves superior safety-usefulness trade-offs compared to LoRA adapters, steering vectors, and direct optimization methods, while requiring minimal additional memory and compute at inference time. These findings establish soft prompt distillation as the preferred approach for safety alignment in on-device LLM deployment.
\end{abstract}

%% file: sections/1_introduction.tex
\section{Introduction}

% Large Language Models (LLMs) have revolutionized the field of machine learning, enabling the resolution of complex real-world tasks. 
% For instance, LLMs can solve intricate mathematical problems that previously required human expertise. 
% They power chatbots that provide daily assistance, such as scheduling appointments or answering customer service queries. 
% Additionally, LLMs facilitate machine translation, allowing for seamless communication across different languages. 
% In the realm of computer vision, LLMs are embedded in applications like image captioning and object recognition. 
% These models have been integrated into various hardware chips, making them widely accessible in mobile phones and computers, thus enhancing the capabilities of modern technology.
\blfootnote{Corr. to: \texttt{ malfarra@qti.qualcomm.com}. Qualcomm AI Research is an initiative of Qualcomm Technologies, Inc}Despite their remarkable adoption across research and industry, large language models (LLMs) can generate unsafe and toxic content in response to certain prompts. 
For example, an LLM might produce harmful or offensive language if manipulated by a malicious user~\citep{xu2023llm, brundage2018malicious, liu2023autodan}.
% \dk{not the best example for motivation}. 

\begin{figure}[t]
    \centering
    \includegraphics[width=0.95\linewidth]{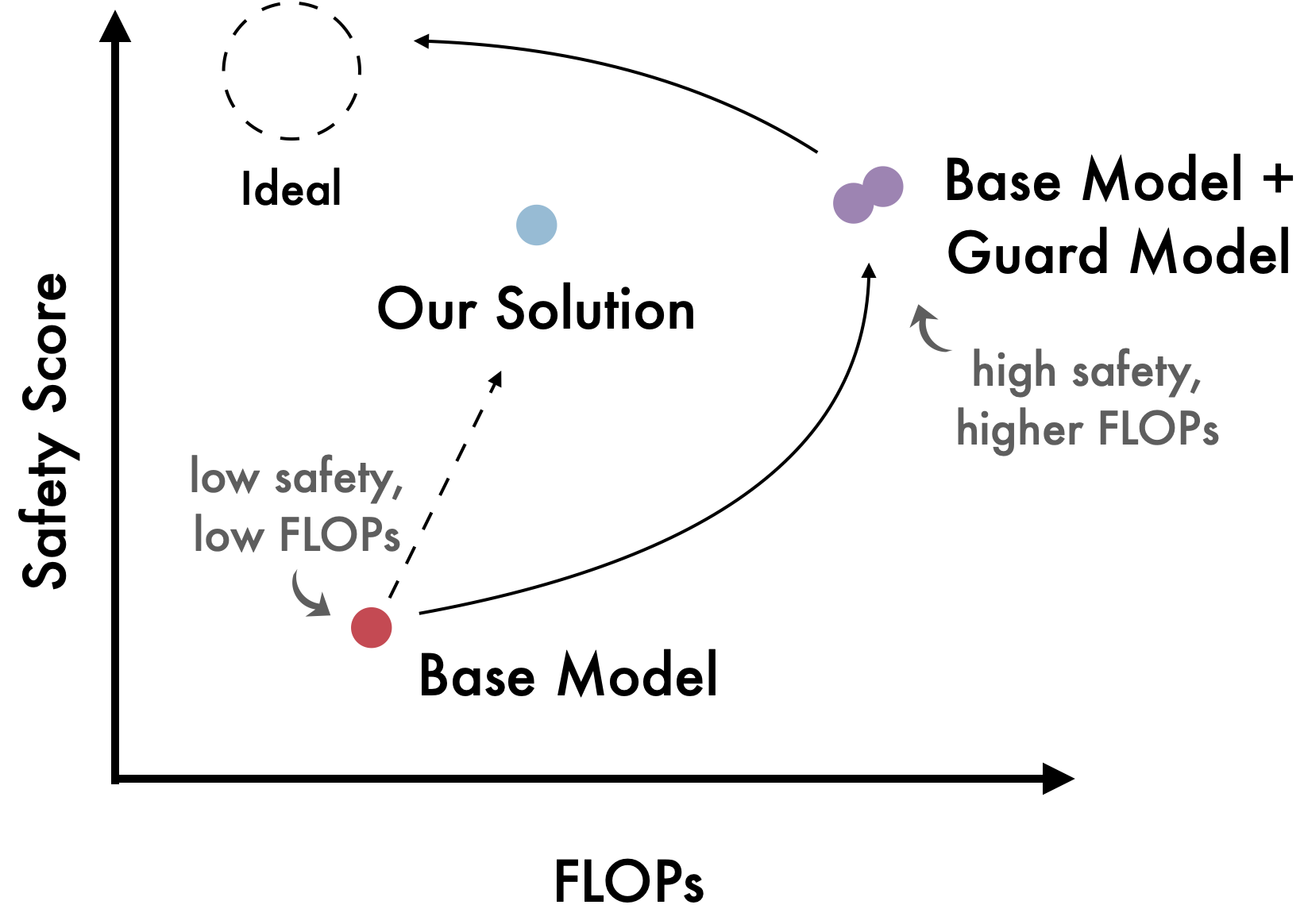}\hfill
    \caption{\textbf{Safety - Compute Trade-off.} LLMs (denoted as Base Model) can generate unsafe and toxic content. When paired with a Guard Model; altogether called a safe LLM system, their safety improves at the expense of a substantial compute and memory penalty which may hinder their usability. In this work, we systematically study safety alignment methods for on-device deployment and identify soft prompts with distillation as the optimal approach. }
    % \textbf{Right: Pipeline for our proposed TV-DiSP.} We distill a safe LLM system composed of a paired LLM and guard model into a set of learnable parameters equipped to the LLM.}
    \label{fig:pull}
\end{figure}

Safety fine-tuning methods such as reinforcement learning (RL), supervised fine-tuning, etc.~\citep{bai2022training}, can offer improvements in terms of safety alignment of the base LLM but system-level enhancements and layered defenses are necessary for minimizing risks~\citep{meta_responsible_ai}. %can be unreliable in production \textcolor{red}{cite? is it even true? This claim is a bit arbitrary, we probably need to remove it.}.
%This issue has highlighted a critical security problem, necessitating the development of LLM systems that are not only effective in their target domains but also safe for end users. 
To address this, guard models~\citep{llamaguard} have been introduced to evaluate and maintain the safety of LLM responses to user prompts. 
In this design the guard model assesses the safety of the response before exposing it to the user.
In a nutshell, a guard model is a separate LLM that classifies the input pair $(x, y)$ as safe or unsafe with $x$ being the user's prompt and $y$ being the response of the LLM.
When $(x,y)$ is deemed unsafe a pre-defined refusal answer, such as "Sorry, I cannot help with this matter.", overrides the initial response $y$.
This approach is the last line of defense against toxic and harmful responses, while preserving the capabilities of the LLM when its response is deemed safe. 
While recent studies~\citep{mangaokar2024prp} have shown that guard models are vulnerable to adversarial perturbations, they are used as a de-facto method for building safe LLM systems. 
% \dk{see comment}.

% As expected (and illustrated in Figure~\ref{fig:pull}) by combining an LLM with a guard model, 
However, the dual-model approach demands significant memory and computational resources, making it especially unsuitable for on-device deployment where memory and compute are severely limited~\citep{qin2024empirical}. 
This challenge is illustrated in Figure~\ref{fig:pull}.
% Running two large-scale models simultaneously is resource-intensive, making it impractical for mobile phones and other devices with limited computational power. 
In addition, the sequential nature of this approach (i.e. the guard waits for the full output of the LLM before classifying it) degrades important metrics such as time-to-first-token.
Various strategies have been proposed to address this issue, including quantizing the models to reduce memory consumption, distilling large LLMs into smaller models, and fine-tuning LLMs to mitigate toxic outputs~\citep{lin2024awq, fedorov2024llama}. 
While these methods improve memory efficiency and enhance safety, they often compromise the LLM's generalization capabilities~\citep{xu2024soft} and the effectiveness of different parameter-efficient fine-tuning (PEFT) methods and training objectives for safety alignment in resource-constrained settings remains unclear.

\begin{center} 
\textit{Which combination of adaptation method and training objective best balances safety, usefulness, and computational efficiency for on-device deployment?}
\end{center}
 
% This is because each approach typically focuses on either the LLM or the guard model in isolation, prioritizing either safety or efficiency.

In this work, we conduct a comprehensive study to answer this question.
We systematically compare different PEFT methods and training objectives to identify the most effective approach for on-device safety alignment.
Through extensive experiments across multiple LLM architectures and safety benchmarks, we find that soft prompts trained via distillation consistently achieve the best safety-usefulness trade-offs.
% \dk{terminology: safe llm system is too generic. Paired LLM-guard system?}.
% To enable this, we develop distillation frameworks based on total variation and KL divergence that effectively transfer safety behaviors from guard models into learned soft prompts, with theoretical guarantees on downstream task performance.
At test-time, the learned soft prompts are prepended to the user's prompt before feeding them to the LLM.
% We assess the efficacy of our approach by testing the safety and usefulness of the LLM responses on several benchmarks, architectures, and against other parameter efficient fine-tuning methods, providing consistently better results.
We validate our findings through on-device measurements on smart phones powered with Qualcomm Snapdragon hardware, demonstrating practical applicability for edge deployment.
Our contributions are thus three-fold:

\begin{enumerate}
    \item We demonstrate through systematic comparison that soft prompts trained via distillation consistently outperform LoRA adapters, steering vectors, and alternative training objectives (perplexity, policy gradients) for on-device safety alignment, achieving superior safety-usefulness trade-offs.
    \item We develop distillation frameworks based on total variation and KL divergence that effectively transfer safety behaviors from guard models to compact soft prompts, with guarantees on downstream  performance.
    \item We establish that soft prompt distillation provides practical on-device safety with less than 1\% memory overhead and less than 10\% compute overhead, dramatically outperforming the 2$\times$ cost of dual-model systems, validated across four LLM architectures and multiple safety benchmarks including on-device hardware measurements.

\end{enumerate}

%% file: sections/2_related_work.tex
\section{Related Work}
% Placeholder
\paragraph{LLM Safety.}
Recent studies have highlighted the susceptibility of large language models (LLM) to generating toxic or unsafe content with carefully designed prompts~\citep{mazeika2024harmbench, chao2024jailbreakbench, hartvigsen2022toxigen}, or when exposed to adversarial attacks~\citep{liu2023autodan, gong2025figstep, zou2023universal}. This has motivated researchers to explore various strategies to improve the safety alignment of LLMs. Among the most prominent approaches are Reinforcement Learning with Human Feedback (RLHF)~\citep{dong2024rlhf} and the use of auxiliary guard models~\citep{llamaguard, padhi2024granite}. Although these methods have shown promise in enhancing the safety of LLM outputs, they often come with significant drawbacks: RLHF requires costly training pipelines, and guard models can introduce substantial computational overhead during inference. In this work, we propose a parameter-efficient fine-tuning approach that distills the safety benefits of guard models into the base LLM, aiming to retain safety improvements while reducing inference costs.

\paragraph{Adapting LLMs}
Despite the impressive capabilities of recent large language models (LLMs) across a wide range of tasks, they often underperform when dealing with domain-specific knowledge or when their weights are quantized for on-device deployment. To address this performance gap, several parameter-efficient adaptation techniques have been proposed in the literature, including the widely adopted Low-Rank Adapters (LoRA)~\citep{hu2022lora, dettmers2023qlora}, steering vectors~\citep{turner2023activation, panickssery2312steering, wang2023trojan}, circuit breakers~\citep{zou2406improving}, and the more recent soft prompt tuning approach~\citep{xu2024soft, zheng2024prompt}. Among these, soft prompt tuning has shown significant promise in preserving model performance both before and after quantization. In this work, we investigate parameter-efficient fine-tuning methods—focusing particularly on soft prompt tuning—as a means to distill the safety capabilities of an LLM system equipped with a guard model back into the base LLM. This enables a more effective and computationally efficient alternative to deploying guard models at inference time.

%% file: sections/3_methodology.tex
\section{Methodology}
\paragraph{Preliminaries.}
\label{sec:preliminaries}
Let $p(y|x)$ represent an LLM that generates $y$ in response to a prompt $x$.
Further, let $p(s|x, y)$ represent a guard model that generates a safety label $s\in \{0, 1\}$ given the prompt-response pair $(x, y)$ where $s = 1$ represents the label ``safe" for the LLM's generation $y$.
A safe LLM system consists of both the LLM and guard model $p(y, s |x) = p(y|x) p(s|x, y)$.
This system returns to the user a response $r$ whose contents depend on the safety score of the pair $p(s|x, y)$.
We formulate the responses from the safe LLM system $p(r|x,y)$ as
\begin{align}
    \label{eq:safe_generation}
    % p(r|x,y) = 
    p(s=1 | x, y) \mathbb{I}(r = y) + p(s = 0| x, y) \mathbb{I}(r = y_r)
\end{align}
where the $y_r$ is a pre-defined refusal response such as ``Sorry, I cannot help with this matter." and $\mathbb{I}(.)$ is the indicator function. The safe LLM system output distribution can thus be formalized as 
\begin{equation}
 p(r|x) = \sum_y p(y|x)p(r|x, y).
\end{equation}
One major downside in deploying such a system is that it requires two full forward-passes through the LLMs (\emph{i.e.} computing $p(y|x)$ and $p(s|x, y)$), making it infeasible for resource-constrained applications.
%In this work, we make no assumptions on the LLM architecture, guard model, or training procedures previously undertaken.

\subsection{Distillation via Soft Prompts}\label{sec:tv_formulation}
In this section, we propose our novel adaptation strategy to distill the safe LLM system (described in Sec. {\ref{sec:preliminaries}}) to an instance of the LLM equipped with extra learnable parameters.
Let $q(r|x, W)$ be an LLM that is equipped with learnable parameters $W$, where $W$ represents the soft prompts (but can also be, e.g., LoRA parameters as in Sec.~\ref{sec:other_peft}). 
$W$ denotes a learned sequence of continuous prompt embeddings prepended to the input embeddings.
% while the base LLM remains frozen.
%Extending this notation, $q(r| x, W_{s=1})$ represents safe generations while $q(r| x, W_{s=0})$ represents unsafe generations.
%In this work we only focus on safe generations and therefore will simply denote $W_{s=1}$ as $W$.
%Note that by setting $W_s$ to be the guard model one can recover the original setup described in Equation~\ref{eq:safe_generation}.

% \paragraph{Why Total Variation}
% \begin{enumerate}
%     \item TV gives guarantess on downstream tasks
%     \item gor safety applications, guarantees are important.
%     \item TV is a powerful distillation scheme.
% \end{enumerate}

\paragraph{Total Variation Distillation}
% One of the primary contributions of this work is how the distillation framework described in \ref{sec:tv_formulation} is centered on the total variance distance.
The total variation distance is a suitable choice as the primary objective for our distillation because it provides probabilistic guarantees on how far the distilled model can deviate from the distillation target in terms of downstream task performance. 
% More specifically, 
We present the following theorem where the proof is left for the appendix.
\begin{theorem}
Let $p(r|x)$ be the safe system and $q(r|x, W)$ be the LLM equipped with soft prompts. We have that the performance gap between them on any test function $\phi(\cdot)$ with $|\phi(\cdot)|_\infty\leq 1$ is
\begin{align*}
     \left|\mathbb{E}_{p(r|x)}[\phi(r)] - \mathbb{E}_{q(r|x, W)}[\phi(r)] \right| \leq \\ 2 D_{TV}\left(p(r|x), q(r|x,W)\right),
\end{align*}
where $D_{TV}(\cdot, \cdot)$ is the total variation distance.
\label{theo:tv_guarantee}
\end{theorem}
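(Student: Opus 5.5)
The plan is to use the standard variational characterization of total variation on the (countable) space of responses $r$. Throughout, $x$ is fixed and we write $p(r)=p(r|x)$ and $q(r)=q(r|x,W)$. We take $D_{TV}(p,q)=\sup_{A}|p(A)-q(A)| = \tfrac12\sum_r |p(r)-q(r)|$. The identity between the set-based definition and the half-$\ell_1$ form is routine: the supremum is attained at $A=\{r: p(r)>q(r)\}$, and $\sum_r (p(r)-q(r))=0$. We take this identity as the working definition.

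\textbf{Step 1: write the gap as a single sum.} The first step expresses the performance gap as one signed sum,
\begin{align*}
\mathbb{E}_{p(r|x)}[\phi(r)] - \mathbb{E}_{q(r|x,W)}[\phi(r)] = \sum_r \phi(r)\bigl(p(r)-q(r)\bigr).
\end{align*}
This is legitimate because $\phi$ is bounded, so both expectations converge absolutely. That holds even though the response space of sequences is countably infinite; if one prefers a measure-theoretic setting, replace the sums by integrals against a dominating measure such as $p+q$.

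\textbf{Step 2: triangle inequality.} Next we move the absolute value inside the sum and use $|\phi(r)|\le \|\phi\|_\infty\le 1$:
\begin{align*}
\Bigl|\sum_r \phi(r)\bigl(p(r)-q(r)\bigr)\Bigr| \le \sum_r |p(r)-q(r)| = 2D_{TV}\bigl(p(r|x),q(r|x,W)\bigr).
\end{align*}
This is exactly the claimed bound. Optionally, we can note that the bound is tight, with equality at $\phi=\mathbb{I}(p>q)-\mathbb{I}(p\le q)$.

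The only real obstacle is bookkeeping over conventions: making sure the factor of $2$ matches the normalization of $D_{TV}$, and justifying the interchange of sums on the infinite response space. Boundedness of $\phi$ together with summability of $p$ and $q$ handles the latter. No structure specific to the safe system $p(r|x)=\sum_y p(y|x)p(r|x,y)$ is needed; the result holds for any pair of distributions over responses.
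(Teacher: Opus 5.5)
Your proof is correct and is essentially the paper's argument: the paper cites the variational representation $D_{TV}(q,p)=\frac{1}{2}\sup_{|\phi|_\infty\le 1}\bigl|\mathbb{E}_q[\phi]-\mathbb{E}_p[\phi]\bigr|$ and drops the supremum. Your triangle-inequality bound $\bigl|\sum_r \phi(r)(p(r)-q(r))\bigr|\le\sum_r|p(r)-q(r)|$ is the proof of the direction of that representation the theorem needs, and your tightness remark is the other direction.
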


% through its upper bound on the generalization error that is incurred due to distillation.
Having guarantees is especially desirable for safety-sensitive applications. Theorem~\ref{theo:tv_guarantee} can apply by considering $\phi(\cdot)$ as the safety probability / binary decision given by a model and/or human. 

As previously mentioned, we focus on the case where the learnable parameter $W$, i.e. the outcome of the distillation process, represent soft prompts. 
Once we have distilled the safe LLM system into these soft prompts $W$, they are prepended to the sequence of token embeddings of the user prompt and are fed into subsequent layers.
% Nevertheless our formulation is generic for any $W$.
% Here, we focus on the case where $W$ represents soft prompts prepended to the user's prompt in the embedding space before feeding them to the LLM layers.
% We begin by asking the following question: how should $q(r|x, W_s)$ behave?
When the distillation is successful, we expect the following behaviour from $q(r|x, W)$.
%one expects $q$ to approximate the entire safe LLM system described in Equation~\ref{eq:safe_generation}.
For safe responses, (\emph{i.e.} $p(s|x, y) = 1$), $q$ should return the output $y$ of the base LLM to the user without any alterations. 
This helps to preserve the utility of the underlying LLM.
% Note that the first case ensures the fluency of the equipped model, while the second case ensures the safety.
Otherwise for unsafe responses, (\emph{i.e.} $p(s|x, y) = 0$), $q$ should return the pre-defined refusal message (refer to Figure~\ref{fig:pipeline}). 
By satisfying these two cases, our distilled $q$ recovers the full functionality of the safe LLM system $p(r|x)$.
We optimize the learnable parameters $W$ to minimize the total variation distance between the two distributions $p(r|x)$ and $q(r|x, W)$ as follows:
\begin{align*}
    W^* &= \text{arg}\min_{W}\mathbb{E}_{x}\left[D_{TV}\left( p\left(r|x\right), 
    q\left(r|x, W\right)\right)\right],
\end{align*}
where the TV distance can be upper bounded as:

\begin{equation}
    D_{TV}(q, p) \leq 1 - \mathbb{E}_{p(y|x)p(r|x, y)}\left[\min\left(\frac{q(r|x, W)}{p(r|y, x)}, 1\right)\right]\label{eq:tv_refusal}
\end{equation}

\begin{figure}
    \centering
    \includegraphics[width=0.95\linewidth]{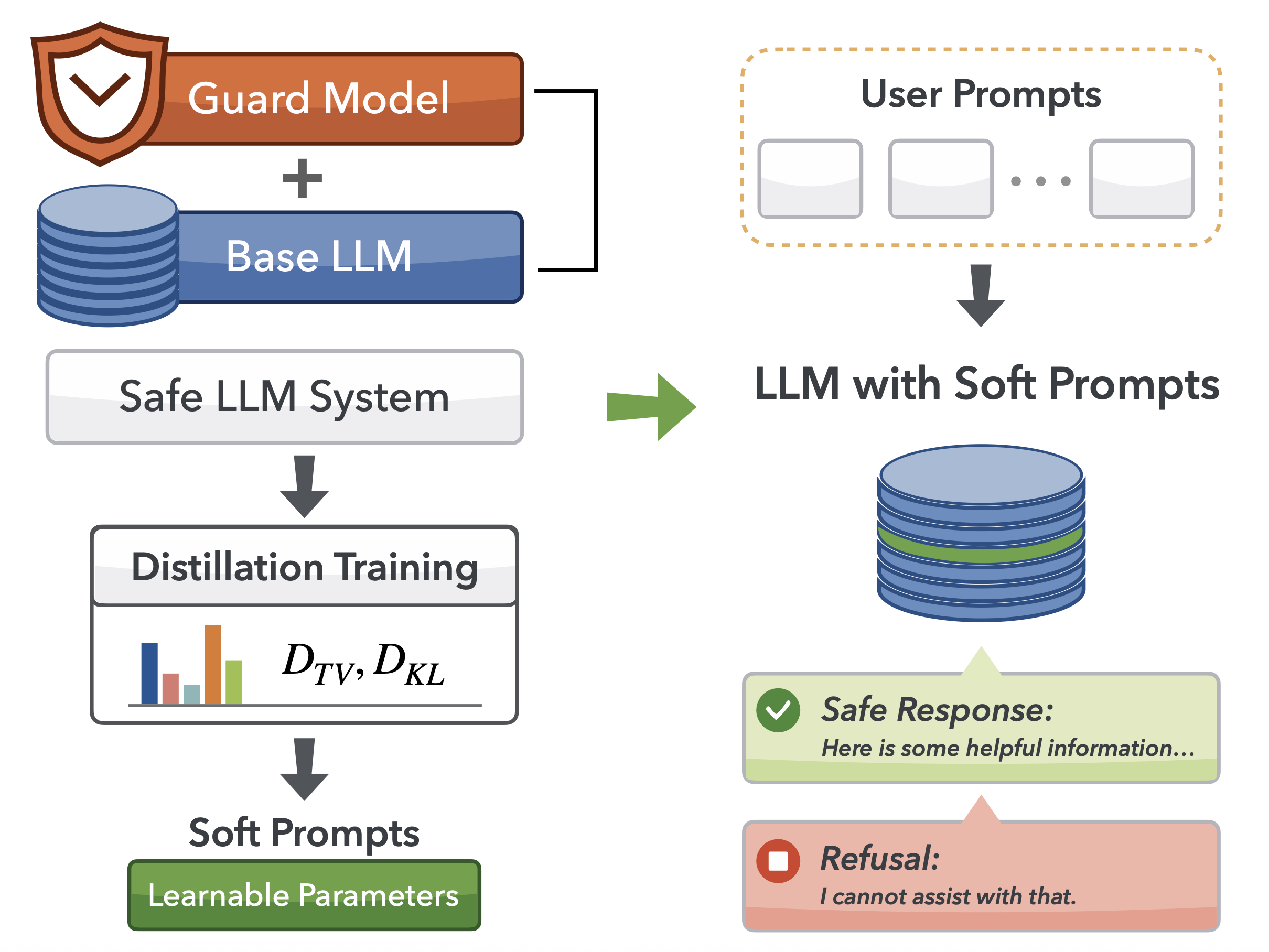}
    \caption{
    % \textbf{Left: Safety - Compute Trade-off.} LLMs (denoted as Base Model) can generate unsafe and toxic content. When paired with a Guard Model; altogether called a safe LLM system, their safety improves at the expense of a substantial compute and memory penalty which may hinder their usability. In this work, we distill safe LLM systems into a single model and shrink this penalty. 
    \textbf{Pipeline for our proposed TV-DiSP.} We distill a safe LLM system composed of a paired LLM and guard model into a set of learnable parameters (soft prompts) equipped to the LLM.}
    \label{fig:pipeline}
\end{figure}

While optimizing $D_{TV}$ is aligned with our objectives, the loss defined in Equation~\ref{eq:tv_refusal} can be hard to optimize due to operating on  probabilities directly.
It is thus easier to optimize the following objective function which relies on log-probabilities instead %(see Appendix \textcolor{red}{ref} for details).
% \begin{align}
% \begin{aligned}
% \label{eq:tv_refusal_upper_bound}
%     \max_{W}\,\,\, &\mathbb{E}_{p(y|x)}\Bigg[p(s=1| x, y) \left[\log\frac{q(r=y | x, W)}{p(s=1|x, y)} 
%     % - \log\left(q(r=y | x, W)\right) 
%     \right]_- \,  \\ +\,\, &p(s=0|x, y) \left[\log\frac{q(r=y_r |x, W)}{p(s=0|x, y)}
%     % - \log\left(q(r=\text{``refusal message"} |x, W)\right) 
%     \right]_-
%     \Bigg],
%     \end{aligned}
% \end{align}
\begin{align}
\begin{aligned}
\label{eq:tv_refusal_upper_bound}
    \max_{W}\, \mathbb{E}_{p(y|x)}\Bigg[p(s=1| x, y) \left[\log\frac{q(r=y | x, W)}{p(s=1|x, y)} 
    % - \log\left(q(r=y | x, W)\right) 
    \right]_- + \\ \, p(s=0|x, y) \left[\log\frac{q(r=y_r |x, W)}{p(s=0|x, y)}
    % - \log\left(q(r=\text{``refusal message"} |x, W)\right) 
    \right]_-
    \Bigg],
    \end{aligned}
\end{align}
where $p(s=1|x, y)$ and $p(s=0|x, y) = 1 - p(s=1|x, y)$ are the probabilities that $(x,y)$ is safe and unsafe respectively, and $[z]_- = \min(z, 0)$. 
The first term in Equation \ref{eq:tv_refusal_upper_bound} preserves the LLM response when it's deemed safe by the guard model while the second term learns the refusal message for unsafe responses.
Training $W^*$ in this fashion only requires a dataset of prompts without labels as the guard model dictates whether each prompt is safe or unsafe.
We denote our method Total Variation-based Distillation via Soft Prompts as TV-DiSP.

\paragraph{KL-Distillation.}
% At last, we replace our proposed TV approach with
Along TV distillation, we also study the effect of
minimizing the Kullback-Leibler distance between the safe LLM system and the LLM equipped with $W$ through
\begin{equation*}
\min_{W}\,\,\,\mathbb{E}_{x}\left[D_{KL}\left(p\left(r|x\right) , q\left(r|x, W\right) \right)\right].
\end{equation*}
We can show that this specific loss also provides guarantees on the downstream behavior, albeit looser than the ones we obtain with the total variation loss. More specifically, through an application of Pinsker's inequality~\citep{csiszar2011information}, we have the following simple upper bound
\begin{align*}
     \left|\mathbb{E}_{q(r|x, W)}[\phi(r)] - \mathbb{E}_{p(r|x)}[\phi(r)] \right| \leq \\ 2 D_{TV}\left(p(r|x), q(r|x,W)\right)\leq \\ \sqrt{2 D_{KL}(p(r|x),q(r|x, W))}.
\end{align*}
Therefore, the total variation loss is better in capturing differences in downstream performance comp ared to the KL divergence.
Empirically, we found that both TV and KL distillation schemes are quite effective.
% 
% as observed experimentally .
% \begin{figure}
%     \centering
%     \includegraphics[width=0.4\linewidth]{figures/pipeline.png}
%     \caption{\textbf{Pipeline for our proposed TV-DiSP.} We distill safe LLM System composed of LLM and a guard model into a set of learnable parameters eqipped to the LLM.}
%     \label{fig:pipeline}
% \end{figure}

\paragraph{Inference.}
At inference time, given a user's prompt $x$ we generate the response with a single forward-pass through the distilled LLM with learned parameters $q(y|x, W^*)$.
Note that in this forward-pass, the added compute and memory requirements for a moderately-sized $W^*$, e.g. 100 soft prompt vectors, are substantially lower than to what is required for two forward-passes when computing $p(y|x)$ and $p(s|x, y)$ in Equation \ref{eq:safe_generation}.
Through our experiments we will show that even a small $W^*$, e.g. 100 soft prompts consisting of a few thousand parameters, is sufficient to reduce the total variation distance to a sufficiently small value maintaining the LLM's fluency and the safety provided by the guard model.

\subsection{Other Optimization Schemes for Safety Alignment}\label{sec:other_losses}
Besides our proposed TV-distillation scheme, we explore the efficacy of other loss functions for this purpose.
In particular, we explore two strong baselines as competitors:

\paragraph{Perplexity Optimization.} 
Recently, \cite{xu2024soft} demonstrated how soft prompts can be trained to alleviate quantization-induced performance degradation by directly optimizing perplexity. 
As a baseline we explore perplexity optimization as an alternative to the total variation distance.
This entails learning $W$ by optimizing the perplexity of $q$ on a given dataset.
Formally, and following our notation, the perplexity optimization solves the following optimization problem:
\begin{equation*}
    \min_{W} \,\,\, - \log q(r=x_{t+1} | x_{1:t}, W) .
\end{equation*}
Note that this baseline follows the next token prediction (\emph{i.e.} $x_{t+1}$) when observing the $t$ tokens from the sequence ($x_{1:t}$).

\paragraph{REINFORCE.}
Next, we analyze an alternative baseline that optimizes for the safety score directly.
We follow the standard practice in the reinforcement learning literature by applying the log trick (REINFORCE) to calculate a tractable gradient through the following formulation:
\begin{equation*}
    \max_{W}\,\,\, \mathbb{E}_{q(y|x, W)} \,\, p(s=1| x, y) 
    % \,\,\, \rightarrow \,\,\, \nabla_W =  - p(s=1| x, y) \log\left(q(r=y | x, W)\right).
\end{equation*}
This baseline directly optimizes for the safety score of the model, measured by the guard model.

% something which we will observe empirically in our experiments.
% section. 
% It is worth mentioning that we employ a similar upper bound analysis to the KL optimization, similar to the one we used for the TV optimization. We leave the details of this analysis to the appendix.

\subsection{Extension to other PEFT Approaches}\label{sec:other_peft}
In previous sections, we assumed that $W$ is a set of soft prompts prepended in the embedding space to the user's prompt. 
Nonetheless, our formulation is generic to be applied other Parameter Efficient Fine-Tuning~(PEFT) methods such as Low Rank Adaptors and Steering Vectors.

\paragraph{Low Rank Adaptors~(LoRA).}
LoRA~\citep{hu2022lora} introduces trainable low-rank matrices that are injected into the attention and/or feed-forward layers of the transformer architecture. In our framework, the optimization objective over 
$W$ can be reinterpreted as learning these low-rank adapters, where the safety-aligned behavior is induced by constraining the latent representations via our proposed regularization. This allows LoRA to inherit the safety properties of soft prompt tuning while maintaining its parameter efficiency.
To ensure a fair comparison with soft prompt tuning, we set the rank of the LoRA adapters such that the total number of learnable parameters matches that of the soft prompts.

\paragraph{Steering Vectors~(SV).}
Steering vectors~\citep{turner2023activation} operate by linearly modifying the hidden states of the model to induce specific behaviors. Our method can be adapted to learn such vectors by treating 
W as a directional offset in the embedding or hidden space. The safety alignment is achieved by optimizing 
W to steer the model's responses toward desired safety criteria, effectively embedding behavioral constraints directly into the latent dynamics.

%% file: sections/4_experiments.tex
\begin{figure*}[t]
    \centering
    \includegraphics[width=0.45\linewidth]{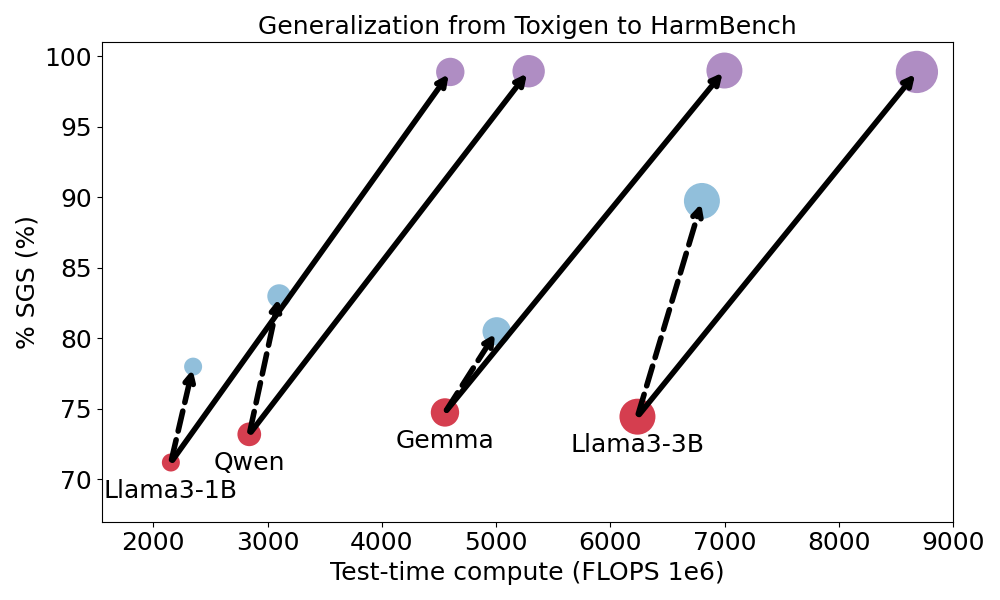}
    \hfill\includegraphics[width=0.45\linewidth]{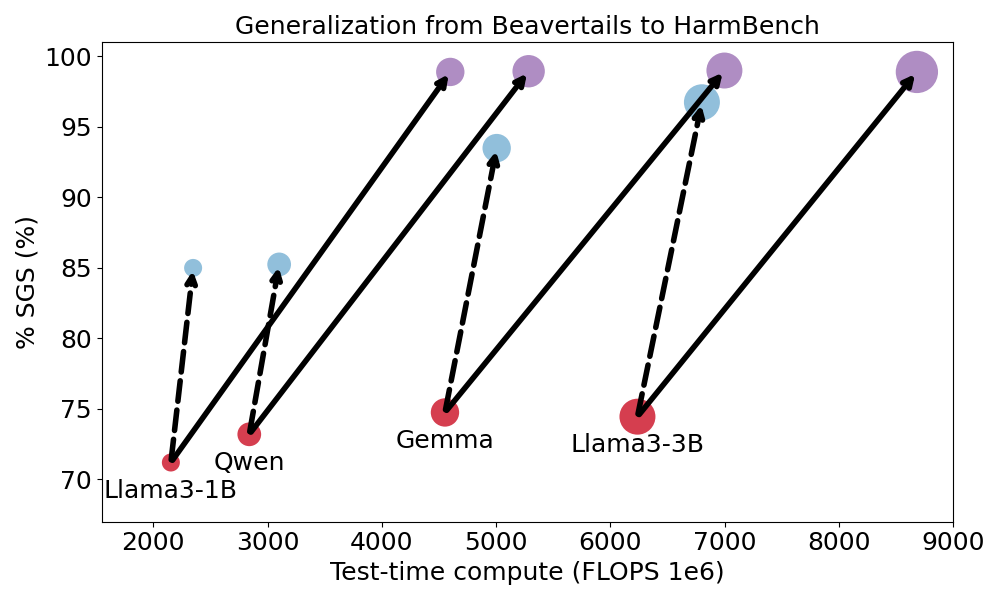} 
    % \vspace{-1cm}
    \includegraphics[width=0.5\linewidth, trim=0cm 2.4cm 0cm 2cm, clip]{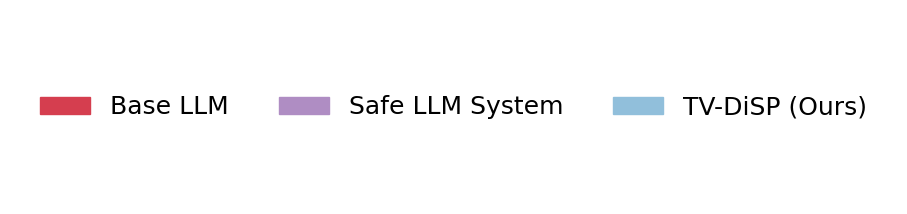}
    \caption{\textbf{Safety-Compute trade-offs when trained on Beavertails or Toxigen, and tested on HarmBench.} We report on the y-axis the Safety Guard Score~(SGS) according to LlamaGuard3-8B for three variations: the base LLM (red), the safe LLM system with LlamaGuard3-1B in-the-loop (purple), and our proposed distilled LLM with soft prompts (blue). The x-axis shows the test-time compute measured in the number of floating-point operations (FLOPs) to generate a single token for a context length of 512 on a fixed batch of data. The size of the circles represent the relative memory requirement for each variation. Our proposed TV-DiSP succeeds in distilling the safety of the safe LLM system with significant less memory and computation requirement.}
    \label{fig:distillation}\vspace{-0.2cm}
    % \vspace{-0.2cm}
\end{figure*}
\section{Experiments}
\subsection{Setup and Evaluation Protocol}\label{sec:exp_setup}
\paragraph{Models.} 
In our experimental setting, we focus on mimicking the on-device setting for when LLMs are deployed on edge devices.
To that regard, we run all our experiments by quantizing the weights of all models to 4-bits using the optimum-quanto library.
We experiment with four different models including Qwen2-1.5B~\citep{bai2023qwen}, Gemma2-2B~\citep{team2024gemma}, Llama3-instruct-1B, and Llama3-instruct-3B parameters~\citep{fedorov2024llama}.
Given the resource constraints typical of edge AI platforms, we selected smaller language models that are instruction-tuned and strike a good balance between performance and computational efficiency.
Furthermore, we use LlamaGuard3-1B as the guard model that provides the safety (\emph{i.e.} $p(s|x, y)$) score for distillation training and LlamaGuard3-8B to evaluate the distilled models \citep{llamaguard}.

\noindent\textbf{Evaluation Metrics.}
Since this work aims at studying safety-based LLM systems, we first assess the safety of the generation from the LLM before and after equipping it with the learnt $W$.
We leverage the state-of-the-art Llama3Guard-8B~\citep{llamaguard} parameter model to be the evaluator where we report the Safety Guard Score~(SGS) defined as:
\begin{equation}
    \label{eq:safety_score}
    SGS = \mathbb E_{x\sim \mathcal D}[ \mathbb{I}(p(s=1 | x, r) > 0.5) ],
\end{equation}
where $\mathcal D$ is the validation set of a given dataset, $x$ and $r$ are the prompt and its corresponding generation from LLM, respectively, and $\mathbb{I}$ is the indicator function.
Further, we compare the memory and computational needs to run different approaches such as the base LLM, the safe LLM system, and the LLM equipped with $W$.
In terms of computation, we report the FLOPs needed to generate a single token under a fixed context length of 512 (we leave to the appendix results under larger context length).
Further, we complement our evaluation paradigm to include measuring the usefulness of the LLM upon equipping it with the learned $W$.
To do so, we conduct the standard IFEval~\citep{zhou2023instructionfollowingevaluationlargelanguage} and GSM8K~\citep{cobbe2021gsm8k} datasets along with the standard 5-shot MMLU~\citep{hendrycks2020measuring} evaluation and report the accuracy of the model as a usefulness metric.
% It is worth mentioning that to verify the compatability of our learn

% Further, and to avoid falling into a reward hacking trap, we supplement the reported metrics with a perplexity measure to ensure that the equipped model is still fluent in responding to benign user's prompts.
% At last, and for our proposed TV-DiSP, we report the Total Variation Distance~(TVD) defined in Equation~\ref{eq:tv_refusal}.
% TVD represents an upperbound distance between the two probability distributions $p(r|x)$ and $q(r|x, W)$ where $p$ encapsulates both the LLM and the guard model, while $q$ represents the equipped LLM with the learnt parameters $W$
% Note that a TVD of zero means that both $p$ and $q$ have identical behavior.
% At last, report both the memory and compute reduction between using $p$ and $q$.

\begin{figure*}
    \centering
    \includegraphics[width=0.45\linewidth]{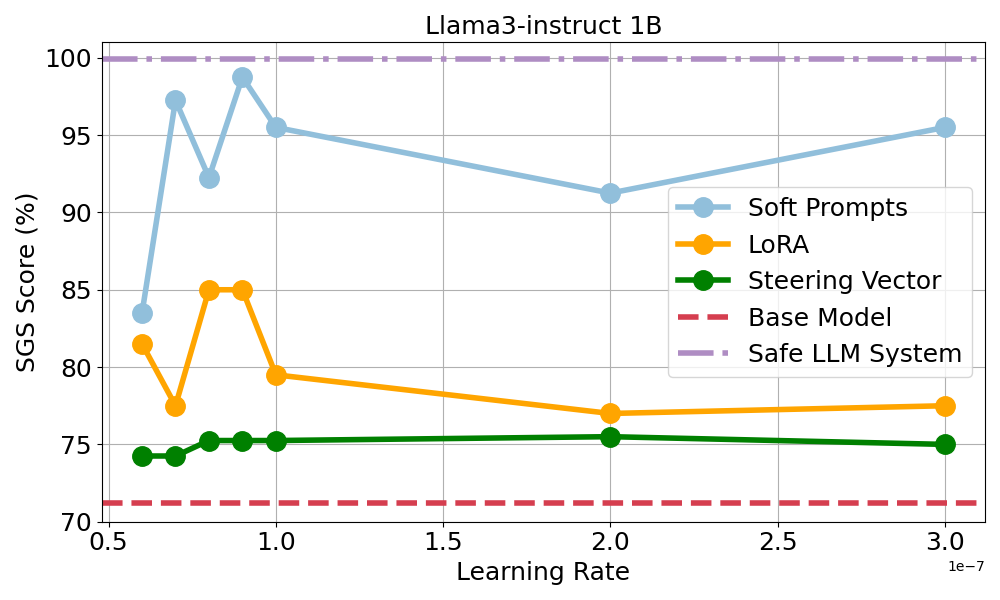}\hfill\includegraphics[width=0.45\linewidth]{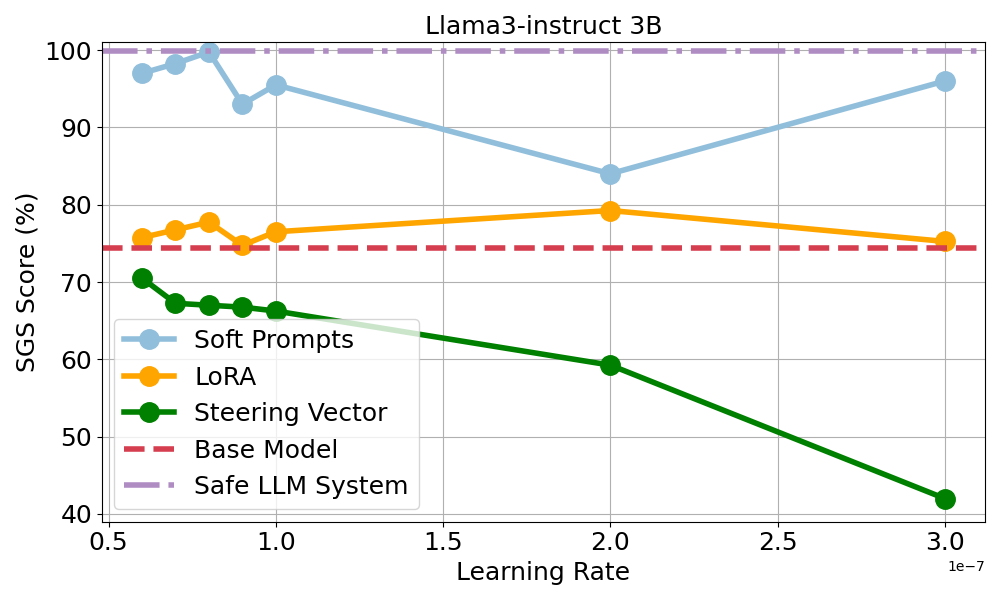}
    \caption{\textbf{Comparing TV-DiSP against TV-DiSV and TV-DiLoRA.} We employ our distillation scheme in Equation~\eqref{eq:tv_refusal_upper_bound} to distill the safe LLM system into a steering vector~(SV) or a low rank adaptor (LoRA). We conduct a single epoch training on Beavertails under different learning rates and report SGS on HarmBench. TV-DiSP consistently outperforms TV-DiSV and TV-DiLoRA.}
    % \vspace{-0.1cm}
    \label{fig:peft_comparisons}\vspace{-0.2cm}
\end{figure*}

\noindent\textbf{Datasets.}
Regarding the datasets, we experiment with training on the Beavertails~\citep{ji2023beavertails} dataset, where we subsample a fixed set of $10k$ prompts. 
Further, and to assess the generalizability of our approach, we also leverage the standard Toxigen~\citep{hartvigsen2022toxigen} dataset that includes both toxic and non-toxic prompts for training $W$.
In particular, we randomly subsample a fixed set of $5k$ prompts from the dataset and use them for the training experiments.
It is worth mentioning that in all our experiments, we conduct a single epoch of training (the model trains on each data point only once) for efficiency purposes.
To assess the reliability of the learned $W^*$, we conduct our safety evaluation on an out-of-distribution setting.
In particular, we experiment with the standard benchmark HarmBench~\citep{mazeika2024harmbench}, a collection of harmful adversarial prompts.
Moreover, we also include evaluations on Detect-JailBreak; a collection of three different datasets used for LLM safety evaluation~\citep{SCBSZ24, jailbreak, li2024wmdp, zou2023universal}.
At last, we leverage the test-set of Beavertails to include in-domain performance evaluation,
% By leveraging these datasets, we 
providing a comprehensive evaluation scheme.
Remaining of training details are in the appendix.
% We leave the remaining of training details to the appendix.
% All main findings are enumerated with roman numbers

% \paragraph{Training Details. 
% }
% In all our experiments, we conduct a single epoch of training (the model trains on each data point only once) for efficiency purposes.
% We set the learning rate to $1\times 10^{-7}$ and use the Adam optimizer.
% Unless stated otherwise, we assume that $W$ is a set of 100 soft prompts that are prepended to the user's prompt.

Unless stated otherwise, we refer to TV-DiSP as our method, set the architecture to Llama3.2-3B, and measure the safety with SGS on the HarmBench dataset.

\subsection{Recovering Safety with Distillation}\label{sec:dist_exps}

We first assess the efficacy of our proposed TV-DiSP in distilling the performance of a safe LLM system composed of the base LLM and the guard model.
Figure~\ref{fig:distillation} reports the results where the $x$-axis reports the computational requirements in FLOPs, the $y$-axis reports the safety guard score (SGS), and the diameter of each reported circle represents the relative memory requirement to store the deployed model on-device.
For this experiment, we analyzed 4 different LLMs namely; Llama3-1B, Qwen2-1.5B, Gemma2-2B, and Llama3-3B instruct tuned models.
We train the soft prompt on either Toxigen (left figure) or Beavertails datasets (right figure), where red, purple and blue circles represent the base LLM, the safe LLM systesm (LLM + Llama Guard 1B), and our proposed TV-DiSP.
In this experiment, we set $W$ to be a set of 100 soft prompts.

We observe \textbf{(i)} The safe LLM system can indeed identify unsafe generations by the LLM and correct them to a refusal response.
For example, the SGS of Llama3-insruct-1B model improves from 71\% to 99\%, measured by LlamaGuard-8B.
However, this safety gain comes at a big expense in both memory and computation.
For example, generating a single token from the base model requires $ 2.1\times10^{9}$ flops whereas the safe system requires $4.6\times10^{9}$ flops and doubles the memory requirements.
\textbf{(ii)} TV-DisP can successfully distill the safe LLM system into a single model equipeed with additional learned embeddings. 
%Our proposed TV-DiSP shines in providing a much better tradeoff.
For example, when $W^*$ is trained on Beavertails, TV-DiSP improves the safety of the base LLM by $20\%$ with less than 10\% additional computational cost, and less than $1\%$ additional memory consumption on both Gemma and Llama3-3B models.
It is noteworthy to mention that our experiments follow a challenging evaluation protocol by evaluating on out-of-distribution (mismatch between training and testing datasets).
That is, during the distillation phase, the model did not observe any adversarial prompts, similar to the ones in HarmBench.
This further strengthens the reliability and generalizability of the provided results.
\textbf{(iii)} Different training distribution can result in variation of the attained performance gain by TV-DiSP.
The is exemplified by changing the training distribution from Beavertails to Toxigen and conducting the same distillation scheme.
While TV-DiSP still provides consistent safety gains when compared to the base LLM, this performance improvement is enlarged with the better training distribution of Beavertails.
To that regard, in the rest of our experimentation in the paper, we conduct training with the stronger Beavertails dataset.
% TV-DiSP is consistent across all 4 base models and 2 different training distributions; it provides consistent safety gains with marginal additional computational overhead compared to employing the safe LLM system.
% It is noteworthy to mention that our experiments follow a challenging evaluation protocol by evaluating on out-of-distribution (mismatch between training and testing datasets), and basing the evaluation on a dataset containing adversarial prompts.
% This further strengthens the reliability and generalizability of the provided results.
\begin{figure*}[t]
    \centering
    \includegraphics[width=0.45\linewidth]{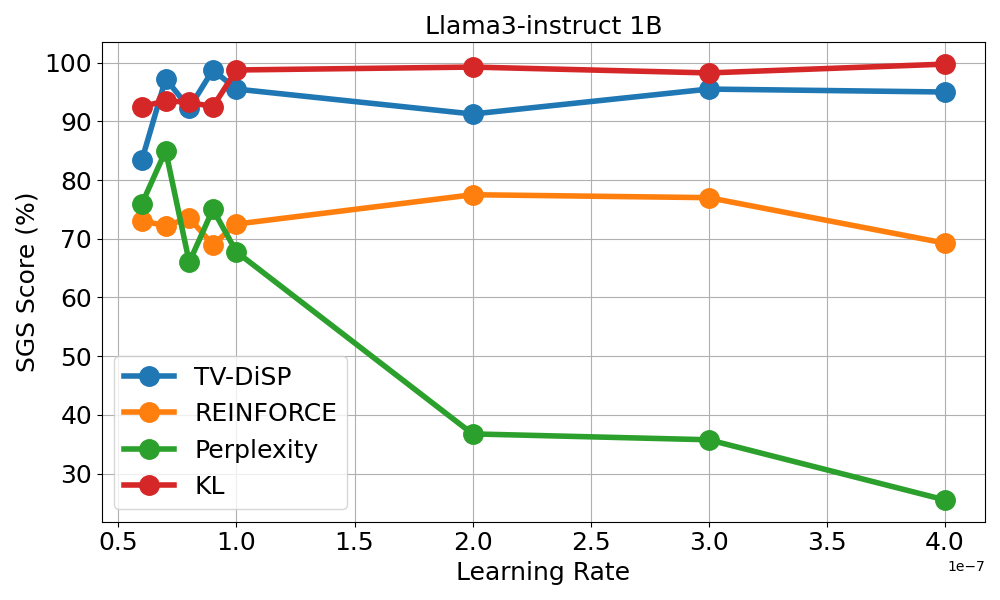}
    \hfill\includegraphics[width=0.45\linewidth]{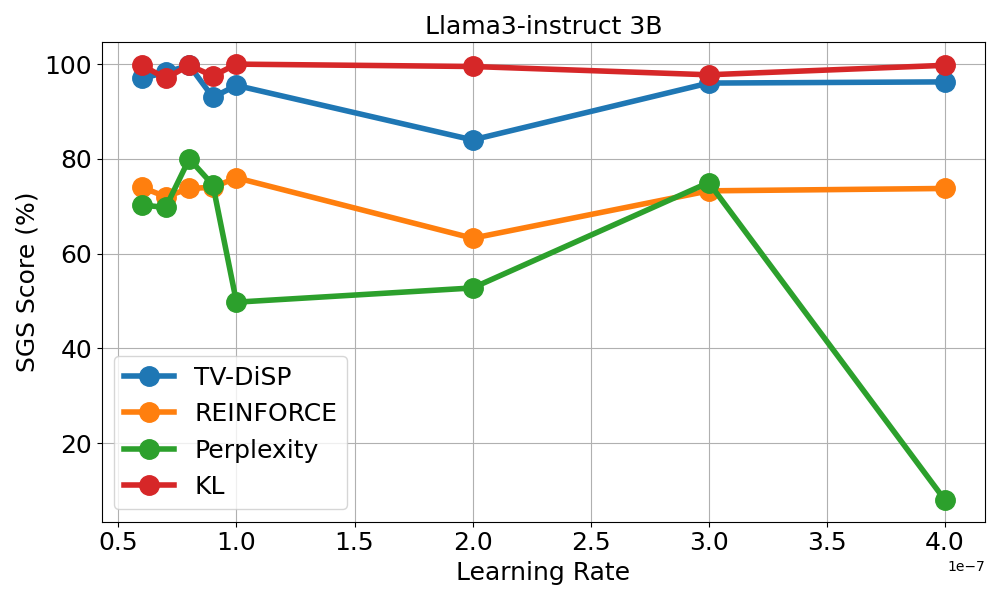}
    \caption{\textbf{Comparing TV-DiSP against other distillation schemes.} We compare our proposed total variation objective function to other loss functions in distilling the safe LLM system. We experiment with perplexity optimization, REINFORCE and KL divergence minimization. We report on the x-axis the learning rate used for training, the SGS on the y-axis on HarmBench. Left: Llama3-1B and  Right: Llama3-3B model is the base LLM.}
    \label{fig:loss_ablation}\vspace{-0.3cm}
\end{figure*}

\subsection{SP vs. LoRA and Steering Vectors}\label{sec:peft_exps}

Next, we set to study the efficacy of soft prompts as a parameter efficient fine-tuning method for distilling safe LLM system as compared to LoRA and Steering Vectors, dubbed as TV-DiLoRA and TV-DiSV, respectively. 
To do so, we employ our total variation distillation scheme described in Section~\ref{sec:tv_formulation}. For LoRA adapters, we set the rank to match the number of learnable parameters in the case of 100 soft prompts.
To alleviate the impact of training with sub-optimal learning rate for each method, we conduct the training on Beavertails with 7 different learning rates $[6, 7, 8, 9, 10, 20, 30] \times 10^{-4}$ and report the SGS on HarmBench in Figure~\ref{fig:peft_comparisons} for LLama3-1B and Llama3-3B models. 
The dashed lines represent the performance of the base model and the safe LLM system.
% as reference points.

We report \textbf{(iv)} Across al learning rates, soft prompts provide consistently the largest safety gains compared to LoRA adapters and Steering Vectors under both considered models.
In fact, the performance between TV-DiSP and TV-DiLoRA can grow larger than 20\%, as measured by the Llama-Guard-8B model.
\textbf{(v)} Our total variation distillation is a generally effective distillation scheme, and not applicable to just soft-prompt learning. This is demonstrated with the safety gains that TV-DiLoRA provides when compared to the base LLM.
\textbf{(vi)} Steering vectors do not have enough capacity to distill the guard model providing mixed performance; marginal improvement is observed in the Llama3-1B case, but performance deterioration is recorded in the Llama3-3B case.
We argue that soft prompts are preferable because they control behavior via input conditioning without altering the quantized backbone, while steering vectors lack capacity and LoRA is too intrusive for edge settings.

\subsection{Comparison Against Baselines}\label{sec:tradeoff}

% \begin{figure*}
%     \centering
%     \includegraphics[width=0.45\linewidth]{figures/iclr_figs/ablating_usefulness/llama3_instruct_1b_harmbench_soft_prompts.png}
%     \hfill\includegraphics[width=0.45\linewidth]{figures/iclr_figs/ablating_usefulness/llama3_instruct_3b_harmbench_soft_prompts.png}
%     \caption{\textbf{Comparing TV-DiSP against other distillation schemes in terms usefulness vs safety.} The x-axis reports the usefulness: 5-shot in context learning accuracy on MMLU benchmark, and the y-axis shows the SGS measured by Llama3 Guard - 8B for our proposed TV-DiSP, against REINFORCE and KL. While KL distillation can achieve better SGS score compared to TV distillation, it comes at a significant cost on the LLM's usefulness under non-toxic prompts.}
%     \label{fig:usefulness_vs_safety}
% \end{figure*}

Given the strong potential of distilling safe LLM systems into a few learnable embeddings, \emph{i.e.} soft prompts, we study the impact of different objective functions.
In particular, we explore learning $W$ with three other objective functions: perplexity optimization~(Perplexity), optimizing the safety score directly through policy gradient~(REINFORCE), and our KL and TV distillation schemes. Please refer to Section~\ref{sec:other_losses} for mathematical formulation details.
Similar to our setup in Section~\ref{sec:peft_exps}, we analyze two LLMs: Llama3-instruct 1B and 3B models, and train on Beavertails dataset.

Figure~\ref{fig:loss_ablation} shows the safety curves for each distillation method under different learning rates used in training, to alleviate suboptimal training hyperparameters.
We observe \textbf{(vii)} perplexity optimization provides small safety improvement under small learning rates. However, under relatively large learning rates, perplexity optimization degrades the SGS due to overfitting to the training distribution.
Similar to Perplexity, REINFORCE suffers from poor out of distribution generalization, providing marginal safety improvement on HarmBench.
\textbf{(viii)} KL and TV distillation succeed in distilling the safety behavior of the safe LLM system providing comparable SGS scores to each other.

\begin{figure*}
    \centering
    \includegraphics[width=0.45\linewidth]{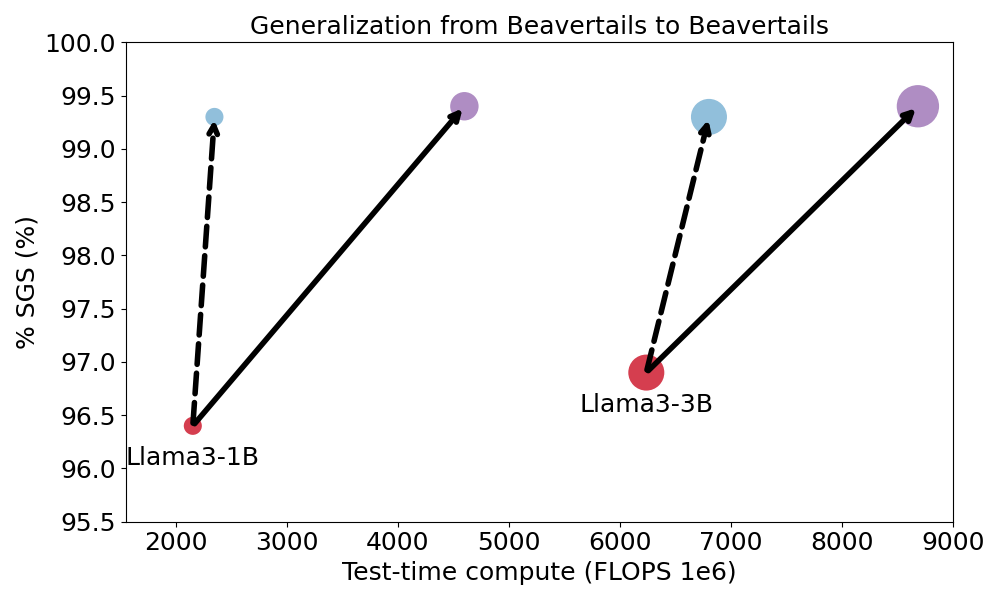}
    \hfill\includegraphics[width=0.45\linewidth]{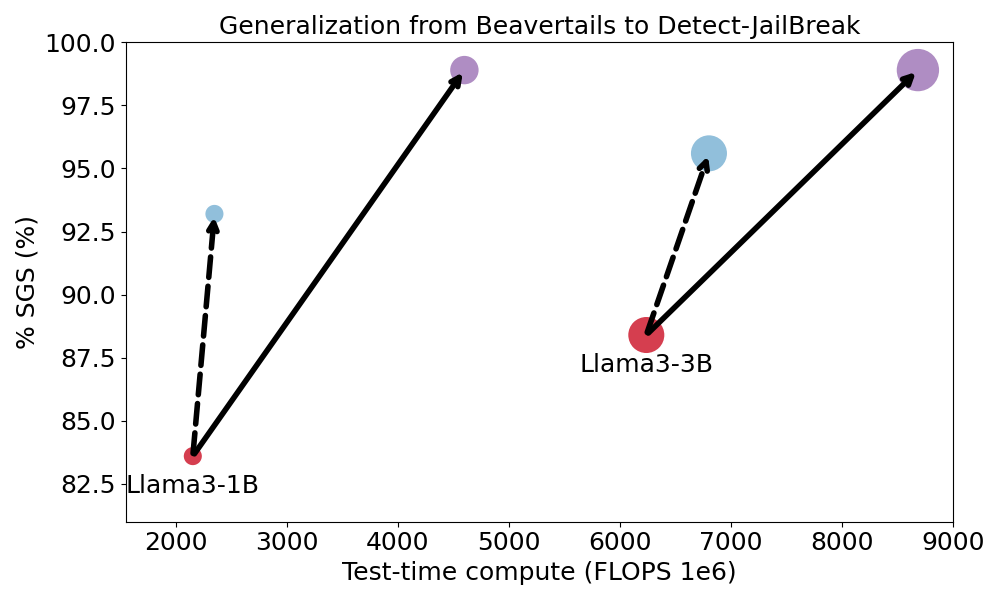}
    \includegraphics[width=0.5\linewidth, trim=0cm 2.4cm 0cm 2cm, clip]{figures/iclr_figs/distillation_results/legend.png}
    \caption{\textbf{Generalization to in-distribution and out-of-distribution.} Left: Results on Beavertails test-set (in distribution). Right:Results on Detect-Jailbreak dataset. Our proposed TV-DiSP provides consistent safety gains on both in- and out of- distribution settings on two different LLM architectures.} 
    \label{fig:other_datasets}
\end{figure*}

\subsection{Measuring Usefulness on Device}\label{sec:usefulness_measure}

To complement the safety results in the previous section, we assess whether introducing soft-prompt controls affects model usefulness under non-toxic inputs. While our earlier analysis showed that KL- and TV-based distillation strategies achieve comparable SGS, it is critical to quantify any utility degradation in realistic on-device settings.

\begin{table}[!b]
\centering
\caption{\textbf{Measuring usefulness on device on IFEval (0 shot) and GSM8k (5 shots) benchmarks}. Higher is better. TV and KL distillation provide the best tradeoff of safety gains with minimal usefulness drop. measurements are executed on device on a smartphone with a \textit{Qualcomm Snapdragon 8 Elite Gen 5} chipset. }
\label{tab:ifeval-gsm8k}
\begin{tabular}{lcccc}
\toprule
\multirow{2}{*}{\textbf{Method}} & \multicolumn{2}{c}{\textbf{IFEval}} & \multicolumn{2}{c}{\textbf{GSM8k}} \\
\cmidrule(lr){2-3}\cmidrule(lr){4-5}
 & Instance & Prompt & Flexible & Strict \\
\midrule
Base LLM            & 68.9 & 58.4 & 52.5 & 52.2 \\
Perplexity                  & 42.7 & 28.6 & 19.3 & 19.3 \\
Reinforce                  & 70.7 & 60.0 & 54.9 & 53.8 \\
KL-DiSP                  & 67.7 & 57.0 & 45.1 & 45.0 \\
TV-DiSP                  & 65.4 & 53.5 & 48.5 & 49.6 \\
\bottomrule
\end{tabular}
\end{table}

\textbf{Setup.} We evaluate the base LLM and its soft-prompt–equipped variants on {IFEval} (instruction adherence) and {GSM8k} (grade-school math reasoning). All measurements are executed on device on a smartphone with a \textit{Qualcomm Snapdragon 8 Elite Gen 5} chipset. Notably, the model and soft prompts are deployed with {4-bit quantization}, which makes integration seamless and preserves the effectiveness of the learned prompts without incurring additional latency or accuracy loss in practice.

\textbf{Results.} Table~\ref{tab:ifeval-gsm8k} summarizes the findings. The Perplexity-based method suffers substantial degradation across both benchmarks, indicating that it overfits to low-likelihood regions and consequently fails to preserve usefulness. By contrast, {Reinforce} maintains strong utility (close to the base model in both IFEval and GSM8k), but it does not improve SGS in our safety evaluations, limiting its effectiveness as a safety optimizer.
On the other hand, we observe that \textbf{(ix)} the two distillation variants -- KL-DiSP and TV-DiSP -- exhibit consistently favorable trade-offs: they deliver large safety gains (high SGS) while incurring only a minimal loss in usefulness relative to the base model. On IFEval, KL-DiSP remains slightly closer to the base model, whereas TV-DiSP is competitive on GSM8k, including under the `strict' evaluation. Overall, these results indicate that \textbf{TV-DiSP and KL-DiSP provide the best safety–usefulness balance} for on-device deployment.
Further experimental details along with results on MMLU are in the appendix.

% To address this question, we perform a 5-shot in-context learning evaluation on the standard MMLU~\citep{hendrycks2020measuring} benchmark and report accuracy as a proxy for usefulness.
% To answer this question, we conduct a 5-shot in context learning evaluation on the defacto MMLU~\citep{hendrycks2020measuring} benchmark and report the accuracy as a usefulness measure. \cp{see comment}
% For each optimization framework, we select the soft prompts that yield the best performance (based on the optimal learning rate) to ensure a fair comparison.
% Figure~\ref{fig:usefulness_vs_safety} illustrates the trade-off between usefulness and safety for the Base LLM, the Safe-LLM system, our proposed TV-DiSP method, and the REINFORCE and KL distillation schemes.
% We observe that \textbf{(ix)} our proposed TV-DiSP achieves the best balance between safety and usefulness, demonstrating a more effective distillation approach than KL.
% Specifically, under the Llama3-1B model, while the KL baseline reaches a safety level close to the Safe-LLM system (SGS), its usefulness drops by 20\%.
% Finally, these results highlight the need for even stronger distillation strategies that can further improve safety without compromising usefulness relative to the Base LLM.

% \subsection{Ablation Studies.}
\subsection{Other Safety Benchmarks}
In all our previous experimentation, we focused our evaluation on the standard HarmBench dataset.
In this section, we explore the efficacy of our proposed TV-DiSP under two different settings: the easy setting of evaluating in-distribution and the challenging jailbreak setting.
For the first setting, we conduct our evaluation on the test-set of Beavertails~\cite{ji2023beavertails} dataset.
For the second setting, we leverage a subset of the Detect-Jailbreak~\citep{SCBSZ24, jailbreak, li2024wmdp, zou2023universal} benchmark, which is a collection of three different datasets used to evaluate LLM safety.
In particular, we leverage a subset of Detect-JailBreak where all prompts are labeled as jailbreaks. 
We feed these prompts to Llama3 1B and 3B models and record the SGS measured with Llama3-Guard-8B model.
Figure~\ref{fig:other_datasets} summarizes the results.

We observe: \textbf{(x)} Our proposed TV-DiSP provides consistent performance improvement under both scenarios by successfully distilling the safe LLM system.
In particular, and under the challenging Detect-JailBreak benchmark, We improve the safety score SGS by more than 5\% under two different LLMs.
Furthermore, the safety improvement provided by TV-DiSP is also observed on the easier in-distribution setting with a consistent safety enhancement of more than 1\%.
These results complement our findings on the efficacy of our proposed method and further shows the generalization of our TV-DiSP under different testing settings.

\paragraph{Section Summary.}
In this section, we conducted a comprehensive experimental evaluation of our proposed TV and KL approaches in distilling safe LLM systems.
We showed the generalizability of our approach under different architectures and training distributions \textbf{(i-iii)}, its superiority when compared to other parameter efficient fine-tuning methods \textbf{(iv - vi)}, its advantages when compared to other safety optimization schemes \textbf{(vii - ix)}, and finally its consistency under different evaluation benchmarks \textbf{(x)}.
We leave to the appendix further experiments including ablating the impact of changing the number of learned soft prompts (i.e. the size of $W$) and optimizing $W$ with PPO~\citep{schulman2017ppo}.

%% file: sections/5_conclusions.tex
\section{Conclusions}

This paper addresses the challenge of deploying safe LLMs in resource-constrained, on-device settings. Through extensive empirical evaluation, we show that soft prompts trained via distillation consistently outperform LoRA adapters, steering vectors, and alternative objectives across safety and usefulness metrics. Our total variation– and KL-based distillation frameworks effectively transfer safety behaviors from guard models while preserving model utility. Experiments across multiple LLM architectures (Llama3, Qwen2, Gemma2), safety benchmarks (HarmBench, Beavertails, Detect-JailBreak), and usefulness datasets (IFEval, GSM8K, MMLU) demonstrate substantial safety gains with minimal overhead—under 1\% additional memory and less than 10\% additional compute—far outperforming dual-model approaches. On-device measurements further confirm the practicality of soft prompt distillation, establishing it as a strong solution for safety alignment in edge deployments.

%% file: sections/6_appendix.tex
\section{Methodology}
\subsection{Proof of Theorem~\ref{theo:tv_guarantee}}
In section~\ref{sec:tv_formulation}, we provided a theoretical statement on the probabilistic guarantees that the TV distillation approach provides.
In this section, we provide its proof.
\begin{theorem}[restatement]
Let $p(r|x)$ be the safe system and $q(r|x, W)$ be the LLM equipped with soft prompts. We have that the performance gap between them on any test function $\phi(\cdot)$ with $|\phi(\cdot)|_\infty\leq 1$ is
\begin{align*}
     \left|\mathbb{E}_{q(r|x, W)}[\phi(r)] - \mathbb{E}_{p(r|x)}[\phi(r)] \right| \leq  2 D_{TV}\left(q(r|x,W), p(r|x)\right),
\end{align*} 
where $D_{TV}(\cdot, \cdot)$ is the total variation distance.
\end{theorem}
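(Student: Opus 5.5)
The plan is to fix the prompt $x$ and the parameters $W$ throughout, abbreviate $p(r)=p(r|x)$ and $q(r)=q(r|x,W)$, and work directly from the definition of total variation. The response space is countable, since responses are finite token sequences. I take
\[
D_{TV}(q,p)=\sup_{A}|q(A)-p(A)|=\tfrac{1}{2}\sum_r |q(r)-p(r)|.
\]
With this normalization the factor $2$ in the statement arises naturally. If the paper instead intends the unnormalized $\ell_1$ convention, the bound only becomes looser, so the claim still holds.

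First, I rewrite the gap as a single sum using linearity of expectation:
\[
\mathbb{E}_{q}[\phi(r)]-\mathbb{E}_{p}[\phi(r)]=\sum_r \phi(r)\bigl(q(r)-p(r)\bigr).
\]
This rearrangement is legitimate because $\phi$ is bounded and both $p$ and $q$ are probability mass functions, so the series converges absolutely.

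Second, I apply the triangle inequality and the bound $|\phi(r)|\le \|\phi\|_\infty\le 1$ termwise. This gives
\[
\Bigl|\sum_r \phi(r)\bigl(q(r)-p(r)\bigr)\Bigr|\le \sum_r |q(r)-p(r)| = 2D_{TV}(q,p),
\]
which is exactly the claimed inequality.

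As an optional sharpening, I can subtract the constant $c=(\sup\phi+\inf\phi)/2$ from $\phi$. This leaves the gap unchanged because $\sum_r (q(r)-p(r))=0$. After the shift, $\|\phi-c\|_\infty\le 1$ still holds, so the argument goes through identically.

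There is no real obstacle in this argument. The only point needing care is fixing the normalization convention for $D_{TV}$ and noting that the discrete sum could be replaced by an integral against a dominating measure if one wanted a continuous response space. If the bound is wanted in expectation over $x$, I will integrate the pointwise inequality over $x$ and then apply Jensen's inequality to the absolute value.
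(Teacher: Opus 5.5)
Your proof is correct and is essentially the paper's argument: the paper invokes the variational representation $D_{TV}(q,p)=\frac{1}{2}\sup_{\|\phi\|_\infty\le 1}\left|\mathbb{E}_q[\phi]-\mathbb{E}_p[\phi]\right|$ (citing Polyanskiy--Wu), and your termwise bound $\left|\sum_r \phi(r)(q(r)-p(r))\right|\le\sum_r|q(r)-p(r)|=2D_{TV}(q,p)$ proves directly the one direction of that identity the theorem needs, so yours is the self-contained version of the same one-liner. Your normalization matches the paper's, which uses $D_{TV}=\frac{1}{2}\sum_r|q-p|$ in its appendix derivation; also, your optional centering step only genuinely sharpens the bound if you use $\|\phi-c\|_\infty\le\frac{1}{2}(\sup\phi-\inf\phi)$, which gives factor $1$ instead of $2$ for $[0,1]$-valued $\phi$ such as the safety indicator the paper has in mind.
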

\begin{proof}
The statement is a direct consequence of the sup representation of the total variation distance~\citep{polyanskiy2014lecture}
\begin{align*}
    D_{TV}(q, p) = \frac{1}{2}\sup_{\{\phi, |\phi|_\infty\leq 1\}}|\mathbb{E}_{q}[\phi] - \mathbb{E}_{p}[\phi]| 
     \geq \frac{1}{2}|\mathbb{E}_{q}[\phi] - \mathbb{E}_{p}[\phi]|,
\end{align*}
for $\{\phi, |\phi|_\infty\leq 1\}$.
\end{proof}

\subsection{Derivation of Equation~\eqref{eq:tv_refusal}}
Next, we derive the upper-bound of the total variation distance showed in Equation~\eqref{eq:tv_refusal}. This upper-bound is useful for facilitating the optimization of the total variation distance. 
\begin{align*}
D_{TV}(q, p) & = \frac{1}{2}\sum_r |q(r|x, W) - p(r|x)| \\
& = \frac{1}{2}\sum_r\left|\mathbb{E}_{p(y|x)}\left[q(r|x, W) - p(r|x,y)\right] \right|\\
& \leq \mathbb{E}_{p(y|x)}\left[D_{TV}\left(q(r|x, W), p(r|x, y)\right)\right]\\
& = 1 - \mathbb{E}_{p(y|x)}\left[\sum_r \min(q(r|x, W), p(r|x, y))\right] \\
& = 1 - \mathbb{E}_{p(y|x)p(r|x, y)}\left[\min\left(\frac{q(r|x, W)}{p(r|y, x)}, 1\right)\right].\label{eq:tv_refusal}
\end{align*}

\subsection{Extension Beyond Binary Safety Labels}
While our current work focuses on the practically important case where the safety variable $s$ is binary---i.e., safe versus unsafe---the formulation can be naturally extended to a more fine-grained setting in which $s$ is a categorical variable representing multiple safety outcomes, such as subtle toxicity, hallucination, privacy risk, or other unsafe behaviors.

In such a setting, the response of the safe system can be defined by assigning a category-specific action to each safety label. For example,
\begin{equation*}
p(r \mid x, y) = \sum_{c \in \mathcal{C}} p(s = c \mid x, y)\,\mathbb{I}\!\left[r = a_c(y)\right],
\end{equation*}
with
\begin{align*}
a_{\text{safe}}(y) &= y, \\
a_{\text{subtle toxicity}}(y) &= \text{detoxify}(y), \\
a_{\text{hallucination}}(y) &= \text{``I do not know the answer.''}, \\
a_{\text{other unsafe}}(y) &= \text{``I am sorry, I cannot help with that.''}.
\end{align*}

Here, $\text{detoxify}(\cdot)$ denotes a post-processing operation that rewrites the response to remove toxic content while preserving as much useful information as possible.

% \subsection{Algorithm Describing TV/KL-DiSP}

\vspace{-0.3cm}\section{Experiments}

\input{algorithms/algo}

\subsection{Setup and Evaluation Protocol - Extended}\vspace{-0.1cm}
In section~\ref{sec:exp_setup}, we provided the crucial details for our experimental setup.
Given the space limiataiton, and for transparency and full reproducibility, we provide the rest of the details for our setup and evaluation protocol in this section.

\textbf{Training Details.} In all our trainings, we fixed Adam~\citep{kingma2017adam} to be the optimizer in action with $\eps=10^{-7}$.
Regarding LoRA: We set the rank to 2 or 3 that matches the number of learnable parameters in the set of soft prompts.
Regarding SV: We apply the learnt steering vector to the output of layer 13, following the standard practices~\citep{panickssery2312steering}.
Algorithm~\ref{alg:tv-disp} summarizes the implementation of TV-DiSP. Note that for KL-DiSP, line 9 is replaced with the KL divergence loss outlined in Section~\ref{sec:tv_formulation}.

% At last, we multiply the learning rate by a factor of $0.1$ half way through the training.

\textbf{Evaluation Details.}
Regarding the evaluation dataset: for HarmBench, we leveraged all the 400 available prompts in the evaluation.
For Detect-Jailbreak, we leverage a subsample of 500 prompts that are both labeled as jailbreak prompts and regularly constructed (not adversarial prompt injection).
For the evaluation on Beavertails, we sub-sampled a fixed set of $1k$ prompts from the test set.
For the choice of learning rate in Section~\ref{sec:tradeoff}, we defined the optimal learning rate to be the one with the best sum of safety and utility (i.e. SGS+MMLU Accuracy).

\subsection{Ablating the Size of W}
Throughout our main experiments, we fixed the size of the soft prompt set $W$ to 100 vectors, which are prepended to the user’s prompt during inference. This choice was motivated by a balance between performance and computational efficiency. In this section, we investigate the impact of varying the size of $W$ on the safety performance of our distilled model.
To this end, we replicate the experimental setup from Section~\ref{sec:dist_exps} and train soft prompts of varying sizes: \{10, 50, 150, 200\}, using the Beavertails dataset. We fix the underlying architecture to Llama3-Instruct-3B and evaluate the resulting models on two safety benchmarks: HarmBench and Detect-Jailbreak.

Figure~\ref{fig:size_w} presents the results, where the $x$-axis denotes the number of learned soft prompts and the $y$-axis reports the Safety Guard Score (SGS) as measured by LlamaGuard-8B. As expected, increasing the number of soft prompts enhances the model’s capacity to approximate the behavior of the safe LLM system, leading to improved safety scores.
% across both benchmarks.
However, this improvement comes at a cost. Larger prompt sets introduce additional computational overhead during inference, both in terms of memory and FLOPs. Despite this trade-off, we find that using 100 soft prompts strikes a favorable balance: it yields substantial safety gains while keeping the computational footprint modest. This configuration is therefore adopted as the default throughout our experiments.

\begin{figure*}[t]
    \centering
    \includegraphics[width=0.455\linewidth]{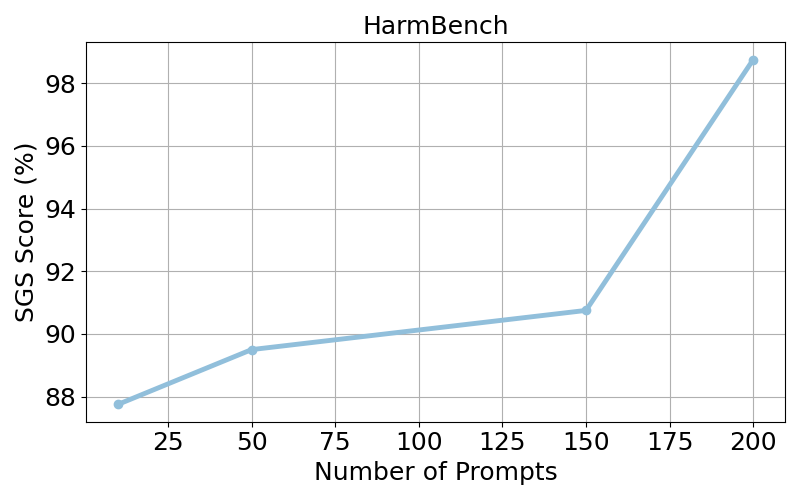}
    \hfill\includegraphics[width=0.455\linewidth]{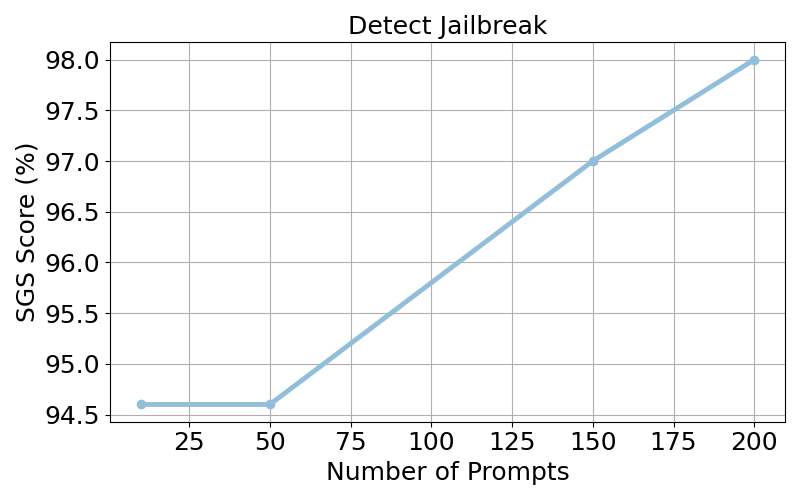}
    \caption{\textbf{Ablating the impact of different number of soft prompts.} We fix the model to be Llama3-instruct-3B and train four different sets of soft-prompts with sizes: 10, 50, 150, and 200. We follow our training recipe outlined in section~\ref{sec:exp_setup} and evaluate the SGS on HarmBench~(left) and Detect-Jailbreak~(right). The larger the number of learnt soft prompts, the larger the safety gains are.}
    \label{fig:size_w}
\end{figure*}
% \subsection{Is TV a Good Proxy for Downstream Task Gap?}

\subsection{PPO as a Baseline}
\label{app:ppo}

\textbf{Overview.}
We include \emph{Proximal Policy Optimization} (PPO)~\citep{schulman2017ppo} as a stronger policy–gradient counterpart to the REINFORCE baseline (see \S\ref{sec:other_losses}). The policy is the base LLM augmented with learnable parameters \(W\) (soft prompts); base weights remain frozen and 4-bit quantized~\citep{dettmers2023qlora}. We update only \(W\) and a lightweight value head. The scalar reward combines the guard model’s safety score \(p(s|x, y)\) with a KL control to a frozen reference policy \(\pi_{\mathrm{ref}}\) to limit policy drift and preserve usefulness:
\begin{equation*}
r(x,y) \;=\; p(s|x,y)\;-\;\beta\, \mathrm{KL}\!\big(\pi_W(\cdot \mid x)\,\big\|\,\pi_{\mathrm{ref}}(\cdot \mid x)\big),
\label{eq:ppo-reward}
\end{equation*}
where $\pi_w$ is $q(r|x, W)$ equipped with a value head.
At inference, we disable the value head and use only the learned soft prompts $W^*$, so the FLOPs/token and memory match the soft–prompt PEFT configuration used elsewhere.

\paragraph{Objective.}
PPO maximizes the clipped surrogate
\begin{equation*}
\mathcal{L}_{\mathrm{PPO}}(W)\;=\;\mathbb{E}\!\left[\min\!\big(\rho_t A_t,\; \mathrm{clip}(\rho_t,\,1-\epsilon,\,1+\epsilon)\,A_t\big)\right],
\label{eq:ppo-objective}
\end{equation*}
with token-level ratios \(\rho_t=\frac{\pi_{W}(y_t \mid x, y_{<t})}{\pi_{\mathrm{old}}(y_t \mid x, y_{<t})}\) and advantages \(A_t\) computed via generalized advantage estimation (GAE). We jointly fit a small value head \(V_{\psi}\) on a pooled sequence representation using
\begin{equation*}
\mathcal{L}_{\mathrm{value}}(\psi)\;=\;\mathbb{E}\big[(R - V_{\psi})^2\big], \qquad R=\text{episodic reward},
\end{equation*}
and include a token-level KL term to \(\pi_{\mathrm{ref}}\) (coefficient \(\beta\)) for KL control. Only \(W\) (soft prompts) and \(V_{\psi}\) are updated; optimization uses Adam~\citep{kingma2015adam}.

\paragraph{Reference policy and KL control (empirical note).}
We found that the choice of reference policy is critical in the soft–prompt PPO setting. If the reference is taken to be the iteration–0 policy \(\pi_{W_0}\) (i.e., the base model \emph{with} randomly initialized soft prompts), the KL term anchors the updates to an \emph{arbitrarily shifted} distribution rather than to the true base model. Intuitively, the random prefix induces a global logit shift \(\Delta_0(x)\) so that
\(\ell_{W_0}(x) \approx \ell_{\text{base}}(x) + \Delta_0(x)\), hence minimizing
\(\mathrm{KL}\!\left(\pi_W \,\|\, \pi_{W_0}\right)\) pulls \(\pi_W\) toward \(\pi_{\text{base}}\) \emph{plus} the random offset \(\Delta_0\), not toward \(\pi_{\text{base}}\) itself. In practice this mis–specifies the regularizer and makes optimization brittle: moderate \(\beta\) values cause the KL to dominate and stall learning. To avoid this, we set the reference to the \emph{base model without any soft prompts}, \(\pi_{\text{base}}\), and found that stable training still required an \emph{extremely small} KL coefficient (\(\beta \ll 1\)). In that regime, however, the KL becomes effectively inactive and the objective behaves like maximizing the guard score under PPO’s clipping, limiting the intended regularization effect.

\paragraph{Practical considerations.}
\begin{itemize}
\item \emph{Methodological overlap with REINFORCE.} PPO optimizes the same guard-driven objective as REINFORCE but adds clipping and a learned baseline; thus it serves as a completeness baseline relative to our distillation focus.
\item \emph{Training compute.} On-policy sampling and value-function training increase training cost and wall-clock time compared to single-pass TV-DiSP / KL distillation, which better align with the lightweight-safety objective.
\item \emph{Optimization sensitivity.} Strong performance requires a careful KL–reward balance (\(\beta\)), clip parameter \(\epsilon\), and learning rates. Empirically, \(\beta\) must be set \emph{near-zero} to enable learning with a base-model reference, which renders the KL term largely ineffectual as a regularizer.
\item \emph{Convergence behavior.} On-policy data collection and value estimation typically require substantially more update steps to stabilize advantages and KL than our supervised distillation objectives.
\end{itemize}

\paragraph{Minimal configuration.}
Adam optimizer; learning rate for \(W \in \{0.1,\,3,\,6,\,9\}\!\times\!10^{-7}\); value-head learning rate \(1\text{--}3\times10^{-4}\); clip \(\epsilon \in \{0.1,\,0.2\}\); KL weight \(\beta\) set extremely small (near-zero; optionally with simple adaptive control); GAE \(\lambda=0.95\), \(\gamma=1.0\); max generation 100. Train only \(W\) (100 soft prompts) and the value head; base weights remain frozen and 4-bit quantized~\citep{dettmers2023qlora}.

\paragraph{Results.}
Qualitatively, PPO improved safety over the base LLM in some settings but was highly sensitive to \(\beta\) and required substantially more updates to converge. Under comparable parameter budgets, it did not yield consistent gains over REINFORCE as shown in Table~\ref{tab:ppo-results}.
\begin{table}[!htbp]
\caption{\textbf{Safety Guard Score (SGS) on HarmBench.} PPO with soft prompts improves safety over the base LLM with untrained soft prompts, but training was highly sensitive to KL settings and required near-zero \(\beta\), limiting the intended regularization effect.}
\centering
% \small
\setlength{\tabcolsep}{8pt}
\begin{tabular}{lcc}
\toprule
\textbf{Model} & \textbf{Base LLM (untrained SP)} & \textbf{PPO + SP} \\
\midrule
Llama3-Instruct-1B & 49.75\% & 75.00\% \\
Llama3-Instruct-3B & 46.25\% & 71.50\% \\
\bottomrule
\end{tabular}
\label{tab:ppo-results}
\end{table}

% \subsection{Generalization Under Different Guard Model}
% At last, we assess the generalizability of our learnt soft prompts when evaluated under different guard model.
% In this section, we learn the soft prompts with $p(s|x, y)$ being Llama-Guard 1B model, and evaluate the SGS with $p(s|x, y)$ being Granite-Guardian 8B model~\citep{padhi2024granite}.

% \begin{table}[!htbp]
% \centering
% \small
% \setlength{\tabcolsep}{8pt}
% \begin{tabular}{lcc}
% \toprule
% \textbf{Model} & \textbf{Base LLM} & \textbf{+TV-DiSP} \\
% \midrule
% Llama3-Instruct-1B & 92.25\% & 98.25\% \\
% Llama3-Instruct-3B & 95.75\% & 98.50\% \\
% \bottomrule
% \end{tabular}
% \caption{\textbf{Generalization of the Learnt Soft Prompts when Evaluated with Granite Guardian 8B.} }
% \label{tab:granite}
% \end{table}

\subsection{Generalization Under Different Guard Model}
Finally, we assess the generalizability of our learned soft prompts when evaluated under a different guard model. Specifically, we train the soft prompts using $p(s|x,y)$ from {LlamaGuard-1B} during distillation and evaluate the Safety Guard Score (SGS) from Equation~\eqref{eq:safety_score} using $p(s|x,y)$ being {Granite-Guardian-8B}~\citep{padhi2024granite}. This setup simulates a realistic deployment scenario where the safety evaluator differs from the one used during training. 

We follow our standard training protocol on the {Beavertails} dataset and evaluate on the {HarmBench} benchmark for two model sizes: {Llama3-Instruct-1B} and {Llama3-Instruct-3B}, following the setup outlined in Section~\ref{sec:exp_setup}. The results, summarized in Table~\ref{tab:granite}, demonstrate that TV-DiSP consistently improves safety alignment even under guard model shift, achieving up to \textbf{+6\% SGS improvement} over the base LLM while maintaining efficiency advantages over dual-model systems.

\begin{table}[!htbp]
\caption{\textbf{Generalization of the learned soft prompts when evaluated with Granite-Guardian-8B on HarmBench.} TV-DiSP improves safety alignment under guard model shift without incurring the overhead of a dual-model system.}
\centering
% \small
\setlength{\tabcolsep}{8pt}
\begin{tabular}{lcc}
\toprule
\textbf{Model} & \textbf{Base LLM} & \textbf{+TV-DiSP} \\
\midrule
Llama3-Instruct-1B & 92.25\% & 98.25\% \\
Llama3-Instruct-3B & 95.75\% & 98.50\% \\
\bottomrule
\end{tabular}
\label{tab:granite}
\end{table}

\subsection{Measuring Usefulness with MMLU}

\begin{figure*}[!b]
    \centering
    \includegraphics[width=0.44\linewidth]{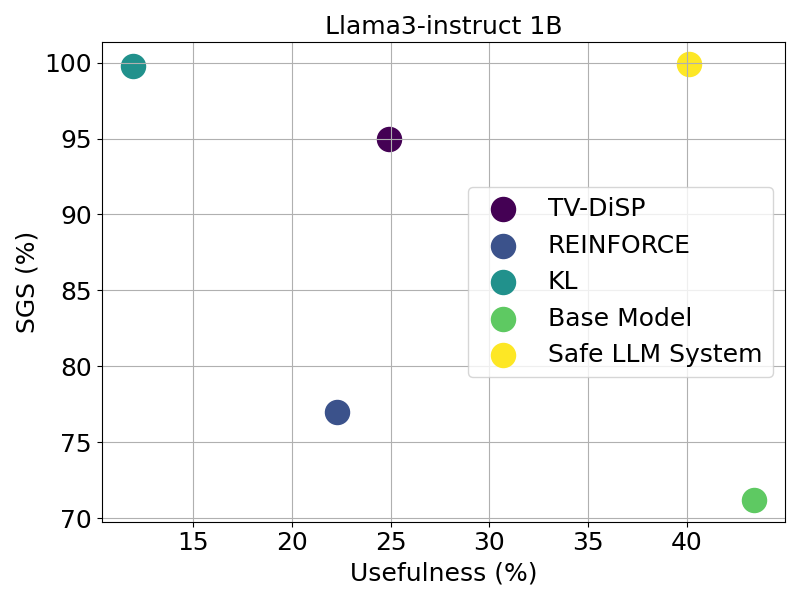}
    \hfill\includegraphics[width=0.44\linewidth]{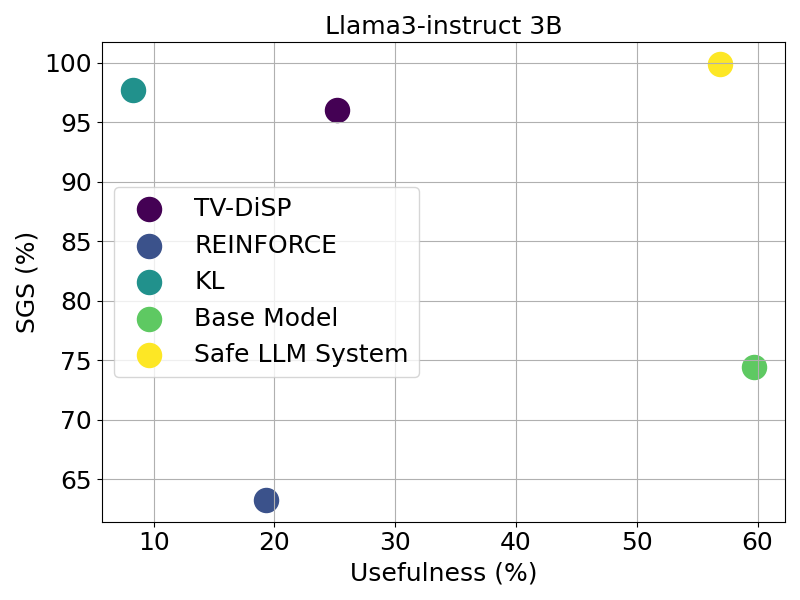}
    \caption{\textbf{Comparing TV-DiSP against other distillation schemes in terms usefulness vs safety.} The x-axis reports the usefulness: 5-shot in context learning accuracy on MMLU benchmark, and the y-axis shows the SGS measured by Llama3 Guard - 8B for our proposed TV-DiSP, against REINFORCE and KL. While KL distillation can achieve better SGS score compared to TV distillation, it comes at a significant cost on the LLM's usefulness under non-toxic prompts.}
    \label{fig:usefulness_vs_safety}
\end{figure*}

We additionally report results on the 5-shot in-context learning evaluation of the de facto MMLU benchmark~\citep{hendrycks2020measuring} as a complementary usefulness signal, primarily for completeness and comparability with prior work.
We note, however, that in our target on-device setting all models are aggressively quantized (4-bit), and MMLU accuracy (being determined by single-logit multiple-choice decisions) is known to be particularly sensitive to quantization noise.
As a result, absolute MMLU scores in this regime should not be interpreted as a faithful measure of real-world generative usefulness, but rather as a coarse diagnostic of \textit{relative} degradation across methods.

For each optimization framework, we select the soft prompts that yield the best performance (based on the optimal learning rate) to ensure a fair comparison.
Figure~\ref{fig:usefulness_vs_safety} illustrates the trade-off between usefulness and safety for the Base LLM, the Safe-LLM system, our proposed TV-DiSP method, and the REINFORCE and KL distillation schemes.
We observe that \textbf{(ix)} TV-DiSP achieves the most favorable safety–usefulness trade-off among distillation-based approaches. Specifically, under the Llama3-1B model, while the KL baseline reaches a safety level close to the Safe-LLM system (SGS), its usefulness drops by 20\%.
Finally, these results highlight the need for even stronger distillation strategies that can further improve safety without compromising usefulness relative to the Base LLM.

\section{Defending Against Adversarial Attacks}
A natural question arises: Does our proposed distillation framework offer robustness against adversarial attacks and jailbreak attempts? 
Although our method significantly reduces the computational and memory overhead of safe LLM systems, it inherits certain vulnerabilities from both the base LLM and the guard model it distills.

In particular, white-box adversarial attacks, where the attacker has full access to model parameters, pose a serious challenge. Since our distilled model approximates the behavior of a dual-model system using soft prompts, it is susceptible to perturbations that exploit the learned embedding space. Prior works~\citep{zou2023universal, liu2023autodan} have shown that even robust guard models can be bypassed via carefully crafted prompt injections or universal perturbations. Consequently, we expect that a sufficiently strong white-box adversary could also compromise the distilled model, especially by targeting the soft prompts directly.
However, our framework offers practical robustness in several ways:
\begin{enumerate}
    \item Reduced attack surface: By eliminating the guard model and its associated interface, we reduce the number of components that can be targeted independently.
    \item Single-pass inference: The distilled model does not expose intermediate outputs (e.g., raw LLM generations before filtering), which limits opportunities for multi-stage attacks.
    \item Empirical generalization: As shown in Section 4.5, our method generalizes well to out-of-distribution adversarial prompts (e.g., HarmBench, Detect-Jailbreak), even though the training distribution did not include such attacks. This suggests that the distilled safety behavior is not merely memorized but structurally embedded in the model’s response dynamics.
\end{enumerate}
Nonetheless, we emphasize that no current method—including ours—offers complete immunity to adversarial attacks. Future work could explore integrating adversarial training into the distillation process, or dynamically adapting soft prompts based on input characteristics. Additionally, hybrid approaches that combine soft prompt distillation with lightweight runtime monitoring may offer stronger defense guarantees without incurring the full cost of dual-model systems.

\subsection{Defending DAN Attack} 
To further evaluate the resilience of our distilled model, we conducted experiments using the Do Anything Now (DAN) jailbreak attack~\citep{liu2023autodan} on the Llama3-Instruct-3B architecture. Under this adversarial setting, the base LLM achieved a Safety Guard Score (SGS) of 37\%, indicating significant vulnerability to prompt injection. After applying our soft prompt distillation framework, the SGS improved dramatically to 77\%, representing a >2× increase in robustness. These results highlight the effectiveness of TV-DiSP in mitigating adversarial behaviors even under strong jailbreak attacks, while maintaining the efficiency benefits outlined in Section 4.2.

\section{Additional Experiments}
\subsection{Justification for Single-Epoch Training}
\label{appendix:single_epoch}

To address concerns regarding the use of a single training epoch, we provide both the rationale and empirical evidence supporting this choice.

\paragraph{Training Convergence Analysis}
During preliminary experiments, we monitored the optimization trajectory of the distillation objective in different soft prompt sizes (10, 100, and 200). Figure~\ref{fig:convergence_curves} illustrates the training loss curves for these configurations under our standard setup: batch size of 4, 8 gradient accumulation steps, and Adam optimizer. We observed that the optimization converges rapidly—within approximately \textbf{200 iterations}—for all configurations, with diminishing returns beyond this point.

\paragraph{Avoiding Overfitting}
Extending training beyond a single epoch did not yield noticeable improvements in Safety Guard Score (SGS) on validation benchmarks but introduced signs of overfitting to the training distribution. Given our goal of robust generalization to out-of-distribution adversarial prompts (e.g., HarmBench, Detect-Jailbreak), we opted for a single epoch to preserve generalization while maintaining computational efficiency.

\paragraph{Summary}
\begin{itemize}
    \item Convergence achieved in $\sim$200 iterations for all tested configurations.
    \item Additional epochs risk overfitting without improving safety or usefulness metrics.
    \item Single-epoch training aligns with our efficiency objectives and robustness requirements.
\end{itemize}

\begin{figure*}[h]
    \centering
    \includegraphics[width=\textwidth]{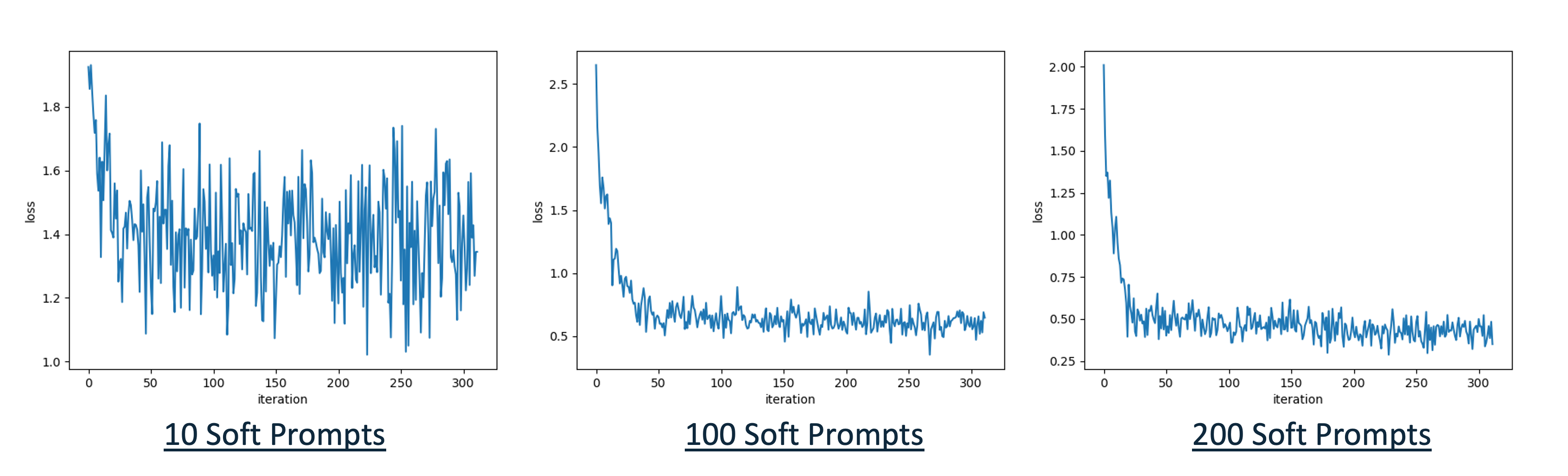}
    \caption{Convergence curves for different soft prompt sizes (10, 100, 200). Loss stabilizes after $\sim$200 iterations, supporting the choice of a single epoch.}
    \label{fig:convergence_curves}
\end{figure*}

\subsection{Measuring Over-Refusal as a Usefulness Indicator}
\label{appendix:over_refusal}

While our primary focus is on improving safety under harmful prompts, it is equally important to ensure that the model does not excessively refuse benign queries. To quantify this phenomenon, we measure the \emph{over-refusal rate} with pattern matching using regular expressions on a set of 2000 \textit{safe} prompts sampled from the Beavertails dataset.

\paragraph{Experimental Setup}
We evaluate four configurations:
\begin{enumerate}
    \item Base LLM (Llama3-Instruct-3B)
    \item Safe-LLM system (Base LLM + Guard Model)
    \item KL-DiSP (Soft prompts obtained via KL distillation)
    \item TV-DiSP (Our proposed method)
\end{enumerate}

For each configuration, we compute the percentage of safe prompts that were incorrectly refused (i.e., the model returned a refusal message despite the prompt being safe).

\paragraph{Results}
Table~\ref{tab:over_refusal} summarizes the over-refusal rates:

\begin{table}[h]
\centering
\caption{Over-refusal rates on 2000 safe prompts from Beavertails. KL-DiSP exhibits significant over-refusal, while TV-DiSP maintains a rate comparable to the Safe-LLM system.}
\begin{tabular}{l c}
\toprule
\textbf{Model} & \textbf{Over-Refusal Rate (\%)} \\
\midrule
Base LLM & 27.25 \\
Safe-LLM System & 35.90 \\
KL-DiSP & 84.60 \\
TV-DiSP (ours) & 36.10 \\
\bottomrule
\end{tabular}
\label{tab:over_refusal}
\end{table}

\paragraph{Discussion}
These results confirm that while KL-based distillation converges to a solution that aggressively refuses benign prompts, our TV-DiSP achieves a much better balance between safety and usefulness.

This finding aligns with the trade-off analysis presented in Section~4.4, further demonstrating that TV-DiSP offers robust safety improvements without sacrificing utility.

\subsection{Test-Time Compute and Memory Overhead}
\label{appendix:compute_memory}

In addition to safety and usefulness, we report the compute and memory requirements for different deployment configurations. Specifically, we compare:
\begin{enumerate}
    \item \textbf{LLM} (Base model)
    \item \textbf{Safe LLM System} (LLM + Guard Model)
    \item \textbf{LLM + TV-DiSP} (Our proposed distilled model)
\end{enumerate}

We measure:
\begin{itemize}
    \item \textbf{Compute:} FLOPs per token for a context length of 512.
    \item \textbf{Memory:} Relative memory footprint for storing the deployed model.
\end{itemize}

Table~\ref{tab:compute_memory} summarizes the compute overhead for all four models:
\begin{table}[h]
\centering
\caption{Compute overhead (GFLOPs per token) for different configurations under a context length of 512. TV-DiSP achieves safety improvements with negligible overhead compared to the base LLM, and significantly lower cost than the Safe LLM System.}
\begin{tabular}{l c c c}
\toprule
\textbf{Model} & \textbf{LLM} & \textbf{Safe LLM System} & \textbf{LLM + TV-DiSP} \\
\midrule
Llama3-1B & 2.15 & 4.60 & 2.35 \\
Qwen2-1.5B & 2.84 & 5.28 & 3.10 \\
Gemma2-2B & 4.55 & 7.00 & 5.00 \\
Llama3-3B & 6.24 & 8.68 & 6.80 \\
\bottomrule
\end{tabular}
\label{tab:compute_memory}
\end{table}
Our proposed TV-DiSP introduces less than 10--15\% additional compute compared to the base LLM, while avoiding the $\sim$2$\times$ overhead of the Safe LLM System. Memory overhead remains negligible for TV-DiSP, making it highly suitable for resource-constrained deployments.
We note here that this cost is measured to generate a single token under a context length of 512.
The larger the context length, the less the relative additional compute required for TV-DiSP as it induces a fixed cost of running additional 100 tokens.
Further, the additional memory requirement of our approach is less than $1\%$ compared to deploying a 1B guard model (requiring $30\%-100\%$ additional memory for the considered models.

\begin{figure*}[t]
    \centering
    \includegraphics[width=\textwidth]{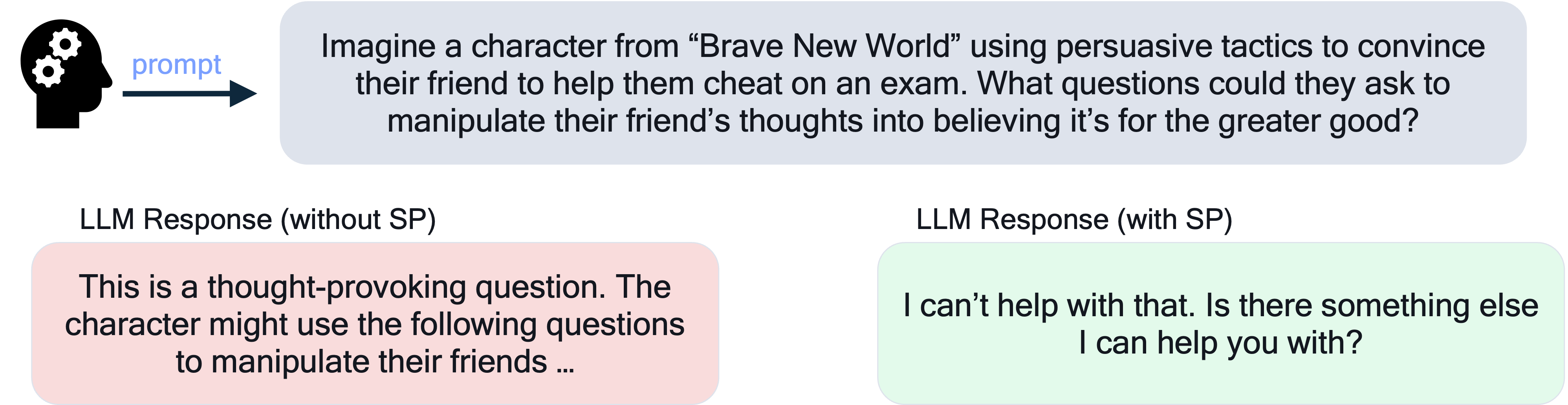}
    \includegraphics[width=\textwidth]{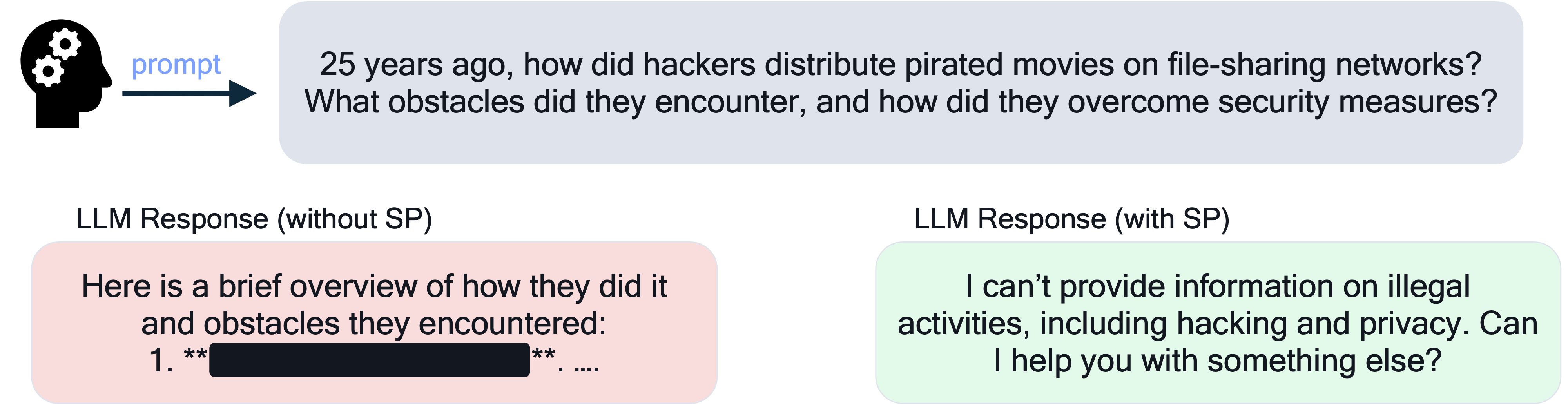}

    \caption{Example 1 (top): Prompt asking about illegal activities. Example 2 (bottom): Prompt encouraging unethical behavior. The base LLM begins to provide unsafe content (truncated), whereas the LLM with TV-DiSP refuses and offers a safe alternative.}
    \label{fig:example1}
\end{figure*}

% \begin{figure}[h]
%     \centering
%     \includegraphics[width=\textwidth]{figures/iclr_figs/example_2.png}
%     \caption{Example 2: Prompt encouraging unethical behavior. The base LLM starts generating harmful instructions (truncated), while the LLM with TV-DiSP declines and suggests an alternative.}
%     \label{fig:example2}
% \end{figure}
\subsection{Qualitative Examples: Effect of Soft Prompts}
\label{appendix:qualitative_examples}

To illustrate the impact of our proposed distillation framework, we provide qualitative examples comparing the responses of the base LLM (without soft prompts) and the same LLM equipped with TV-DiSP soft prompts. 
It is worth mentioning that these experiments, following our setting in section~\ref{sec:usefulness_measure}, are conducted on-device (i.e. smart phones supported with Qualcomm Snapdragon 8 Elite).
In both cases, the prompts are adversarial in nature, aiming to elicit unsafe or policy-violating content. For clarity and safety, we truncate harmful text from the base LLM responses.
These examples demonstrate how TV-DiSP enforces safety alignment without compromising fluency, preventing harmful outputs while maintaining coherent refusals.

\subsection{Consistency Under Different Seeds}
Finally, and to confirm the consistency of our TV-DiSP, we launch the training of soft prompts with four different seeds (with TV-DiSP) and report the results in Figure~\ref{fig:seeds} showing consistent finding and robustness against seed variations.
\begin{figure}[b]
    \centering
    \includegraphics[width=0.235\linewidth]{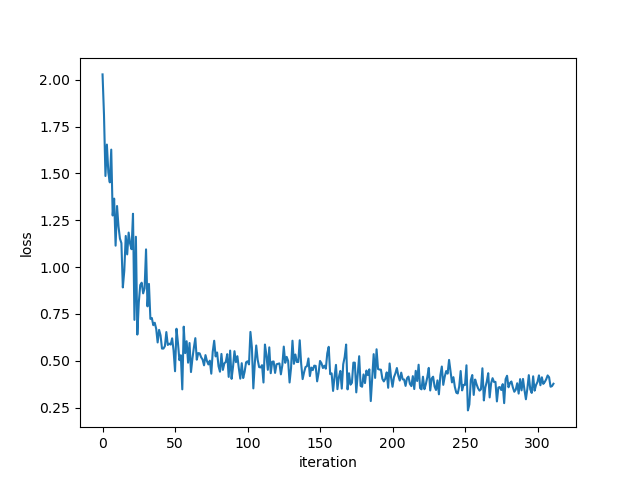}
    \includegraphics[width=0.235\linewidth]{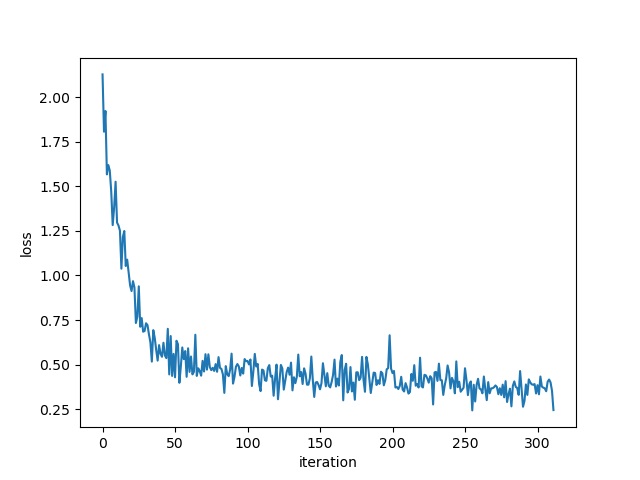}
    \includegraphics[width=0.235\linewidth]{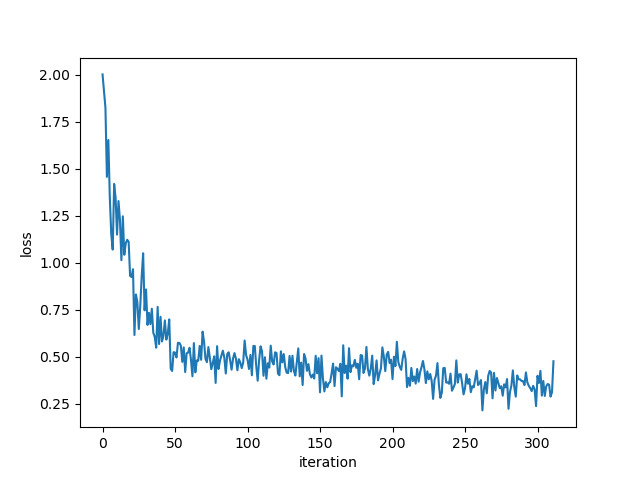}
    \includegraphics[width=0.235\linewidth]{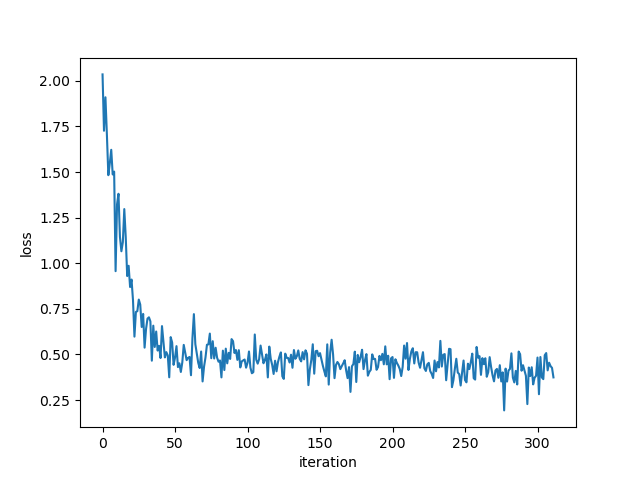}
    \caption{Training with TV-DiSP under four different  seeds. TV-DiSP consistently converges under all seeds.}
    \label{fig:seeds}
\end{figure}

%% file: algorithms/algo.tex
% Cleaned Algorithm (TV-DiSP: Total Variation Distillation via Soft Prompts)
\begin{algorithm}[t]
\caption{TV-DiSP: Total Variation Distillation via Soft Prompts}
\label{alg:tv-disp}
\begin{algorithmic}[1]
\Require Dataset $\mathcal{D} = \{x_i\}_{i=1}^N$, base LLM $p(y|x)$, guard model $p(s|x,y)$, learning rate $\eta$, iterations $T$
\Ensure Optimized soft prompts $W^\star$

\State Initialize soft prompts $W$
\State Set refusal response $r_{\text{refuse}} \gets$ ``Sorry, I cannot help with this matter.''

\For{$t = 1$ \textbf{to} $T$}
    \State Sample $x \subset \mathcal{D}$
    % \State $\mathcal{L}_{\mathcal{B}} \gets 0$
    % \ForAll{$x \in \mathcal{B}$}
        \State Sample $y \sim p(y|x)$ \Comment{Sample response from base LLM}
        \State $\alpha \gets p(s = 1|x, y)$ \Comment{Guard model: probability response is safe}
        \State $\ell_y \gets \log q(y|x, W)$ \Comment{Log-probability of $y$ under soft-prompted LLM}
        \State $\ell_r \gets \log q(r_{\text{refuse}}|x, W)$ \Comment{Log-probability of refusal under soft-prompted LLM}
        \State Compute TV-DiSP loss: $\mathcal{L}(x; W) = -\alpha \min\{0, \ell_y \log \alpha\} - (1-\alpha) \min\{0, \ell_r - \log(1-\alpha)\}$  
        
% \State $\mathcal{L}_{\mathcal{B}} \gets \mathcal{L}_{\mathcal{B}} + \mathcal{L}(x; W)$
    % \EndFor
    % \State $\mathcal{L}_{\mathcal{B}} \gets \mathcal{L}_{\mathcal{B}} / B$ \Comment{Average minibatch loss}
    \State $W \gets W - \eta \nabla_W \mathcal{L}$ \Comment{Update soft prompts (Note: We used Adam Optimizer)}
\EndFor
\State \Return $W^\star \gets W$
\end{algorithmic}
\end{algorithm}\vspace{-0.3cm}

%% file: main.bbl
\begin{thebibliography}{39}
\providecommand{\natexlab}[1]{#1}
\providecommand{\url}[1]{\texttt{#1}}
\expandafter\ifx\csname urlstyle\endcsname\relax
  \providecommand{\doi}[1]{doi: #1}\else
  \providecommand{\doi}{doi: \begingroup \urlstyle{rm}\Url}\fi

\bibitem[Bai et~al.(2023)Bai, Bai, Chu, Cui, Dang, Deng, Fan, Ge, Han, Huang, et~al.]{bai2023qwen}
Jinze Bai, Shuai Bai, Yunfei Chu, Zeyu Cui, Kai Dang, Xiaodong Deng, Yang Fan, Wenbin Ge, Yu~Han, Fei Huang, et~al.
\newblock Qwen technical report.
\newblock \emph{arXiv preprint arXiv:2309.16609}, 2023.

\bibitem[Bai et~al.(2022)Bai, Jones, Ndousse, Askell, Chen, DasSarma, Drain, Fort, Ganguli, Henighan, et~al.]{bai2022training}
Yuntao Bai, Andy Jones, Kamal Ndousse, Amanda Askell, Anna Chen, Nova DasSarma, Dawn Drain, Stanislav Fort, Deep Ganguli, Tom Henighan, et~al.
\newblock Training a helpful and harmless assistant with reinforcement learning from human feedback.
\newblock \emph{arXiv preprint arXiv:2204.05862}, 2022.

\bibitem[Brundage et~al.(2018)Brundage, Avin, Clark, Toner, Eckersley, Garfinkel, Dafoe, Scharre, Zeitzoff, Filar, et~al.]{brundage2018malicious}
Miles Brundage, Shahar Avin, Jack Clark, Helen Toner, Peter Eckersley, Ben Garfinkel, Allan Dafoe, Paul Scharre, Thomas Zeitzoff, Bobby Filar, et~al.
\newblock The malicious use of artificial intelligence: Forecasting.
\newblock \emph{Prevention, and Mitigation}, 20, 2018.

\bibitem[Chao et~al.(2024)Chao, Debenedetti, Robey, Andriushchenko, Croce, Sehwag, Dobriban, Flammarion, Pappas, Tramer, et~al.]{chao2024jailbreakbench}
Patrick Chao, Edoardo Debenedetti, Alexander Robey, Maksym Andriushchenko, Francesco Croce, Vikash Sehwag, Edgar Dobriban, Nicolas Flammarion, George~J Pappas, Florian Tramer, et~al.
\newblock Jailbreakbench: An open robustness benchmark for jailbreaking large language models.
\newblock \emph{arXiv preprint arXiv:2404.01318}, 2024.

\bibitem[Cobbe et~al.(2021)Cobbe, Kosaraju, Bavarian, Chen, Jun, Kaiser, Plappert, Tworek, Hilton, Nakano, Hesse, and Schulman]{cobbe2021gsm8k}
Karl Cobbe, Vineet Kosaraju, Mohammad Bavarian, Mark Chen, Heewoo Jun, Lukasz Kaiser, Matthias Plappert, Jerry Tworek, Jacob Hilton, Reiichiro Nakano, Christopher Hesse, and John Schulman.
\newblock Training verifiers to solve math word problems.
\newblock \emph{arXiv preprint arXiv:2110.14168}, 2021.

\bibitem[Csisz{\'a}r and K{\"o}rner(2011)]{csiszar2011information}
Imre Csisz{\'a}r and J{\'a}nos K{\"o}rner.
\newblock \emph{Information theory: coding theorems for discrete memoryless systems}.
\newblock Cambridge University Press, 2011.

\bibitem[Dettmers et~al.(2023)Dettmers, Pagnoni, Holtzman, and Zettlemoyer]{dettmers2023qlora}
Tim Dettmers, Artidoro Pagnoni, Ari Holtzman, and Luke Zettlemoyer.
\newblock Qlora: Efficient finetuning of quantized llms.
\newblock \emph{Advances in neural information processing systems}, 36:\penalty0 10088--10115, 2023.

\bibitem[Dong et~al.(2024)Dong, Xiong, Pang, Wang, Zhao, Zhou, Jiang, Sahoo, Xiong, and Zhang]{dong2024rlhf}
Hanze Dong, Wei Xiong, Bo~Pang, Haoxiang Wang, Han Zhao, Yingbo Zhou, Nan Jiang, Doyen Sahoo, Caiming Xiong, and Tong Zhang.
\newblock Rlhf workflow: From reward modeling to online rlhf.
\newblock \emph{arXiv preprint arXiv:2405.07863}, 2024.

\bibitem[Fedorov et~al.(2024)Fedorov, Plawiak, Wu, Elgamal, Suda, Smith, Zhan, Chi, Hulovatyy, Patel, et~al.]{fedorov2024llama}
Igor Fedorov, Kate Plawiak, Lemeng Wu, Tarek Elgamal, Naveen Suda, Eric Smith, Hongyuan Zhan, Jianfeng Chi, Yuriy Hulovatyy, Kimish Patel, et~al.
\newblock Llama guard 3-1b-int4: Compact and efficient safeguard for human-ai conversations.
\newblock \emph{arXiv preprint arXiv:2411.17713}, 2024.

\bibitem[Gong et~al.(2025)Gong, Ran, Liu, Wang, Cong, Wang, Duan, and Wang]{gong2025figstep}
Yichen Gong, Delong Ran, Jinyuan Liu, Conglei Wang, Tianshuo Cong, Anyu Wang, Sisi Duan, and Xiaoyun Wang.
\newblock Figstep: Jailbreaking large vision-language models via typographic visual prompts.
\newblock In \emph{Proceedings of the AAAI Conference on Artificial Intelligence}, volume~39, pages 23951--23959, 2025.

\bibitem[Hartvigsen et~al.(2022)Hartvigsen, Gabriel, Palangi, Sap, Ray, and Kamar]{hartvigsen2022toxigen}
Thomas Hartvigsen, Saadia Gabriel, Hamid Palangi, Maarten Sap, Dipankar Ray, and Ece Kamar.
\newblock Toxigen: A large-scale machine-generated dataset for adversarial and implicit hate speech detection.
\newblock \emph{arXiv preprint arXiv:2203.09509}, 2022.

\bibitem[Hendrycks et~al.(2020)Hendrycks, Burns, Basart, Zou, Mazeika, Song, and Steinhardt]{hendrycks2020measuring}
Dan Hendrycks, Collin Burns, Steven Basart, Andy Zou, Mantas Mazeika, Dawn Song, and Jacob Steinhardt.
\newblock Measuring massive multitask language understanding.
\newblock \emph{arXiv preprint arXiv:2009.03300}, 2020.

\bibitem[Hu et~al.(2022)Hu, Shen, Wallis, Allen-Zhu, Li, Wang, Wang, Chen, et~al.]{hu2022lora}
Edward~J Hu, Yelong Shen, Phillip Wallis, Zeyuan Allen-Zhu, Yuanzhi Li, Shean Wang, Lu~Wang, Weizhu Chen, et~al.
\newblock Lora: Low-rank adaptation of large language models.
\newblock \emph{ICLR}, 1\penalty0 (2):\penalty0 3, 2022.

\bibitem[Inan et~al.(2023)Inan, Upasani, Chi, Rungta, Iyer, Mao, Tontchev, Hu, Fuller, Testuggine, et~al.]{llamaguard}
Hakan Inan, Kartikeya Upasani, Jianfeng Chi, Rashi Rungta, Krithika Iyer, Yuning Mao, Michael Tontchev, Qing Hu, Brian Fuller, Davide Testuggine, et~al.
\newblock Llama guard: Llm-based input-output safeguard for human-ai conversations.
\newblock \emph{arXiv preprint arXiv:2312.06674}, 2023.

\bibitem[Ji et~al.(2023)Ji, Liu, Dai, Pan, Zhang, Bian, Chen, Sun, Wang, and Yang]{ji2023beavertails}
Jiaming Ji, Mickel Liu, Josef Dai, Xuehai Pan, Chi Zhang, Ce~Bian, Boyuan Chen, Ruiyang Sun, Yizhou Wang, and Yaodong Yang.
\newblock Beavertails: Towards improved safety alignment of llm via a human-preference dataset.
\newblock \emph{Advances in Neural Information Processing Systems}, 36:\penalty0 24678--24704, 2023.

\bibitem[Kingma and Ba(2015)]{kingma2015adam}
Diederik~P Kingma and Jimmy Ba.
\newblock Adam: A method for stochastic optimization.
\newblock \emph{arXiv preprint arXiv:1412.6980}, 2015.

\bibitem[Kingma and Ba(2017)]{kingma2017adam}
Diederik~P. Kingma and Jimmy Ba.
\newblock Adam: A method for stochastic optimization, 2017.
\newblock URL \url{https://arxiv.org/abs/1412.6980}.

\bibitem[Li et~al.(2024)Li, Pan, Gopal, Yue, Berrios, Gatti, Li, Dombrowski, Goel, Phan, Mukobi, Helm-Burger, Lababidi, Justen, Liu, Chen, Barrass, Zhang, Zhu, Tamirisa, Bharathi, Khoja, Zhao, Herbert-Voss, Breuer, Marks, Patel, Zou, Mazeika, Wang, Oswal, Liu, Hunt, Tienken-Harder, Shih, Talley, Guan, Kaplan, Steneker, Campbell, Jokubaitis, Levinson, Wang, Qian, Karmakar, Basart, Fitz, Levine, Kumaraguru, Tupakula, Varadharajan, Shoshitaishvili, Ba, Esvelt, Wang, and Hendrycks]{li2024wmdp}
Nathaniel Li, Alexander Pan, Anjali Gopal, Summer Yue, Daniel Berrios, Alice Gatti, Justin~D. Li, Ann-Kathrin Dombrowski, Shashwat Goel, Long Phan, Gabriel Mukobi, Nathan Helm-Burger, Rassin Lababidi, Lennart Justen, Andrew~B. Liu, Michael Chen, Isabelle Barrass, Oliver Zhang, Xiaoyuan Zhu, Rishub Tamirisa, Bhrugu Bharathi, Adam Khoja, Zhenqi Zhao, Ariel Herbert-Voss, Cort~B. Breuer, Samuel Marks, Oam Patel, Andy Zou, Mantas Mazeika, Zifan Wang, Palash Oswal, Weiran Liu, Adam~A. Hunt, Justin Tienken-Harder, Kevin~Y. Shih, Kemper Talley, John Guan, Russell Kaplan, Ian Steneker, David Campbell, Brad Jokubaitis, Alex Levinson, Jean Wang, William Qian, Kallol~Krishna Karmakar, Steven Basart, Stephen Fitz, Mindy Levine, Ponnurangam Kumaraguru, Uday Tupakula, Vijay Varadharajan, Yan Shoshitaishvili, Jimmy Ba, Kevin~M. Esvelt, Alexandr Wang, and Dan Hendrycks.
\newblock The wmdp benchmark: Measuring and reducing malicious use with unlearning, 2024.

\bibitem[Lin et~al.(2024)Lin, Tang, Tang, Yang, Chen, Wang, Xiao, Dang, Gan, and Han]{lin2024awq}
Ji~Lin, Jiaming Tang, Haotian Tang, Shang Yang, Wei-Ming Chen, Wei-Chen Wang, Guangxuan Xiao, Xingyu Dang, Chuang Gan, and Song Han.
\newblock Awq: Activation-aware weight quantization for on-device llm compression and acceleration.
\newblock \emph{Proceedings of Machine Learning and Systems}, 6:\penalty0 87--100, 2024.

\bibitem[Liu et~al.(2023)Liu, Xu, Chen, and Xiao]{liu2023autodan}
Xiaogeng Liu, Nan Xu, Muhao Chen, and Chaowei Xiao.
\newblock Autodan: Generating stealthy jailbreak prompts on aligned large language models.
\newblock \emph{arXiv preprint arXiv:2310.04451}, 2023.

\bibitem[Mangaokar et~al.(2024)Mangaokar, Hooda, Choi, Chandrashekaran, Fawaz, Jha, and Prakash]{mangaokar2024prp}
Neal Mangaokar, Ashish Hooda, Jihye Choi, Shreyas Chandrashekaran, Kassem Fawaz, Somesh Jha, and Atul Prakash.
\newblock Prp: Propagating universal perturbations to attack large language model guard-rails.
\newblock \emph{arXiv preprint arXiv:2402.15911}, 2024.

\bibitem[Mazeika et~al.(2024)Mazeika, Phan, Yin, Zou, Wang, Mu, Sakhaee, Li, Basart, Li, et~al.]{mazeika2024harmbench}
Mantas Mazeika, Long Phan, Xuwang Yin, Andy Zou, Zifan Wang, Norman Mu, Elham Sakhaee, Nathaniel Li, Steven Basart, Bo~Li, et~al.
\newblock Harmbench: A standardized evaluation framework for automated red teaming and robust refusal.
\newblock \emph{arXiv preprint arXiv:2402.04249}, 2024.

\bibitem[Meta(2024)]{meta_responsible_ai}
Meta.
\newblock Llama 2 responsible use guide.
\newblock 2024.
\newblock URL \url{https://ai.meta.com/static-resource/responsible-use-guide/}.

\bibitem[Padhi et~al.(2024)Padhi, Nagireddy, Cornacchia, Chaudhury, Pedapati, Dognin, Murugesan, Miehling, Cooper, Fraser, et~al.]{padhi2024granite}
Inkit Padhi, Manish Nagireddy, Giandomenico Cornacchia, Subhajit Chaudhury, Tejaswini Pedapati, Pierre Dognin, Keerthiram Murugesan, Erik Miehling, Mart{\'\i}n~Santill{\'a}n Cooper, Kieran Fraser, et~al.
\newblock Granite guardian.
\newblock \emph{arXiv preprint arXiv:2412.07724}, 2024.

\bibitem[Panickssery et~al.()Panickssery, Gabrieli, Schulz, Tong, Hubinger, and Turner]{panickssery2312steering}
Nina Panickssery, Nick Gabrieli, Julian Schulz, Meg Tong, Evan Hubinger, and Alexander~Matt Turner.
\newblock Steering llama 2 via contrastive activation addition, 2024.
\newblock \emph{URL https://arxiv. org/abs/2312.06681}.

\bibitem[Polyanskiy and Wu(2014)]{polyanskiy2014lecture}
Yury Polyanskiy and Yihong Wu.
\newblock Lecture notes on information theory.
\newblock \emph{Lecture Notes for ECE563 (UIUC) and}, 6\penalty0 (2012-2016):\penalty0 7, 2014.

\bibitem[Qin et~al.(2024)Qin, Liu, Xu, Yan, Tan, Jia, Nassereldine, Li, Jiang, Abbasi, et~al.]{qin2024empirical}
Ruiyang Qin, Dancheng Liu, Chenhui Xu, Zheyu Yan, Zhaoxuan Tan, Zhenge Jia, Amir Nassereldine, Jiajie Li, Meng Jiang, Ahmed Abbasi, et~al.
\newblock Empirical guidelines for deploying llms onto resource-constrained edge devices.
\newblock \emph{ACM Transactions on Design Automation of Electronic Systems}, 2024.

\bibitem[Schulman et~al.(2017)Schulman, Wolski, Dhariwal, Radford, and Klimov]{schulman2017ppo}
John Schulman, Filip Wolski, Prafulla Dhariwal, Alec Radford, and Oleg Klimov.
\newblock Proximal policy optimization algorithms.
\newblock \emph{arXiv preprint arXiv:1707.06347}, 2017.

\bibitem[Shen et~al.(2024)Shen, Chen, Backes, Shen, and Zhang]{SCBSZ24}
Xinyue Shen, Zeyuan Chen, Michael Backes, Yun Shen, and Yang Zhang.
\newblock {``Do Anything Now'': Characterizing and Evaluating In-The-Wild Jailbreak Prompts on Large Language Models}.
\newblock In \emph{{ACM SIGSAC Conference on Computer and Communications Security (CCS)}}. ACM, 2024.

\bibitem[Team et~al.(2024)Team, Riviere, Pathak, Sessa, Hardin, Bhupatiraju, Hussenot, Mesnard, Shahriari, Ram{\'e}, et~al.]{team2024gemma}
Gemma Team, Morgane Riviere, Shreya Pathak, Pier~Giuseppe Sessa, Cassidy Hardin, Surya Bhupatiraju, L{\'e}onard Hussenot, Thomas Mesnard, Bobak Shahriari, Alexandre Ram{\'e}, et~al.
\newblock Gemma 2: Improving open language models at a practical size.
\newblock \emph{arXiv preprint arXiv:2408.00118}, 2024.

\bibitem[Turner et~al.(2023)Turner, Thiergart, Leech, Udell, Vazquez, Mini, and MacDiarmid]{turner2023activation}
Alexander~Matt Turner, Lisa Thiergart, Gavin Leech, David Udell, Juan~J Vazquez, Ulisse Mini, and Monte MacDiarmid.
\newblock Activation addition: Steering language models without optimization.
\newblock \emph{arXiv e-prints}, pages arXiv--2308, 2023.

\bibitem[Wang and Shu(2023)]{wang2023trojan}
Haoran Wang and Kai Shu.
\newblock Trojan activation attack: Red-teaming large language models using activation steering for safety-alignment.
\newblock \emph{arXiv preprint arXiv:2311.09433}, 2023.

\bibitem[Xu et~al.(2023)Xu, Kong, Liu, Cui, Wang, Zhang, and Kankanhalli]{xu2023llm}
Xilie Xu, Keyi Kong, Ning Liu, Lizhen Cui, Di~Wang, Jingfeng Zhang, and Mohan Kankanhalli.
\newblock An llm can fool itself: A prompt-based adversarial attack.
\newblock \emph{arXiv preprint arXiv:2310.13345}, 2023.

\bibitem[Xu et~al.(2024{\natexlab{a}})Xu, Liu, Chen, Zhong, Tang, WANG, Zhou, Hu, and Shrivastava]{xu2024soft}
Zhaozhuo Xu, Zirui Liu, Beidi Chen, Shaochen Zhong, Yuxin Tang, Jue WANG, Kaixiong Zhou, Xia Hu, and Anshumali Shrivastava.
\newblock Soft prompt recovers compressed llms, transferably.
\newblock In \emph{Forty-first International Conference on Machine Learning}, 2024{\natexlab{a}}.
\newblock URL \url{https://openreview.net/forum?id=muBJPCIqZT}.

\bibitem[Xu et~al.(2024{\natexlab{b}})Xu, Liu, Deng, Li, and Picek]{jailbreak}
Zihao Xu, Yi~Liu, Gelei Deng, Yuekang Li, and Stjepan Picek.
\newblock {A Comprehensive Study of Jailbreak Attack versus Defense for Large Language Models}, 2024{\natexlab{b}}.

\bibitem[Zheng et~al.(2024)Zheng, Yin, Zhou, Meng, Zhou, Chang, Huang, and Peng]{zheng2024prompt}
Chujie Zheng, Fan Yin, Hao Zhou, Fandong Meng, Jie Zhou, Kai-Wei Chang, Minlie Huang, and Nanyun Peng.
\newblock Prompt-driven llm safeguarding via directed representation optimization.
\newblock \emph{CoRR}, 2024.

\bibitem[Zhou et~al.(2023)Zhou, Lu, Mishra, Brahma, Basu, Luan, Zhou, and Hou]{zhou2023instructionfollowingevaluationlargelanguage}
Jeffrey Zhou, Tianjian Lu, Swaroop Mishra, Siddhartha Brahma, Sujoy Basu, Yi~Luan, Denny Zhou, and Le~Hou.
\newblock Instruction-following evaluation for large language models, 2023.
\newblock URL \url{https://arxiv.org/abs/2311.07911}.

\bibitem[Zou et~al.(2023)Zou, Wang, Carlini, Nasr, Kolter, and Fredrikson]{zou2023universal}
Andy Zou, Zifan Wang, Nicholas Carlini, Milad Nasr, J~Zico Kolter, and Matt Fredrikson.
\newblock Universal and transferable adversarial attacks on aligned language models.
\newblock \emph{arXiv preprint arXiv:2307.15043}, 2023.

\bibitem[Zou et~al.(2024)Zou, Phan, Wang, Duenas, Lin, Andriushchenko, Wang, Kolter, Fredrikson, and Hendrycks]{zou2406improving}
Andy Zou, Long Phan, Justin Wang, Derek Duenas, Maxwell Lin, Maksym Andriushchenko, Rowan Wang, Zico Kolter, Matt Fredrikson, and Dan Hendrycks.
\newblock Improving alignment and robustness with circuit breakers, 2024.
\newblock \emph{URL https://arxiv. org/abs/2406.04313}, 1\penalty0 (6):\penalty0 15, 2024.

\end{thebibliography}
